
\documentclass{article}

\usepackage{microtype}
\usepackage{graphicx}
\usepackage{booktabs} 




\usepackage[accepted]{icml2025}

\usepackage{amsmath}
\usepackage{amssymb}
\usepackage{mathtools}
\usepackage{amsthm}

\usepackage[capitalize,noabbrev]{cleveref}

\theoremstyle{plain}
\newtheorem{theorem}{Theorem}[section]

\theoremstyle{definition}
\newtheorem{definition}[theorem]{Definition}

\theoremstyle{remark}

\usepackage[textsize=tiny]{todonotes}

\usepackage{amsmath}
\usepackage{graphicx}
\usepackage{booktabs}
\usepackage{multirow,multicol}
\usepackage{tikz}
\usepackage{color}
\usepackage{colortbl}
\usepackage{amsmath}
\usepackage{amssymb}
\usepackage{bm}
\usepackage{subfigure}
\usepackage{adjustbox}

\newcommand{\method}{\textsc{AMP}}

\newcommand{\rv}{\textit{r.v.\@}}

\newcommand{\quotes}[1]{``#1''}

\definecolor{rwth-red}{cmyk}{.15,1,1,0}\colorlet{rwth-lred}{rwth-red!50}\colorlet{rwth-llred}{rwth-red!25}
\definecolor{rwth-green}{cmyk}{.7,0,1,0}\colorlet{rwth-lgreen}{rwth-green!50}\colorlet{rwth-llgreen}{rwth-green!25}

\icmltitlerunning{Adaptive Message passing}

\begin{document}

\twocolumn[
\icmltitle{Adaptive Message Passing: A General Framework to Mitigate \\ Oversmoothing, Oversquashing, and Underreaching}



\icmlsetsymbol{equal}{*}

\begin{icmlauthorlist}
\icmlauthor{Federico Errica}{nec}
\icmlauthor{Henrik Christiansen}{nec}
\icmlauthor{Viktor Zaverkin}{nec} \\
\icmlauthor{Takashi Maruyama}{nec}
\icmlauthor{Mathias Niepert}{nec,stuttgart}
\icmlauthor{Francesco Alesiani}{nec}
\end{icmlauthorlist}

\icmlaffiliation{nec}{NEC Laboratories Europe}
\icmlaffiliation{stuttgart}{University of Stuttgart}

\icmlcorrespondingauthor{Federico Errica}{federico.errica@neclab.eu}

\icmlkeywords{Graph Machine Learning, Variational Inference, Oversmoothing, Oversquashing, Underreaching, Depth Learning}

\vskip 0.3in
]



\printAffiliationsAndNotice{}  

\begin{abstract}
    Long-range interactions are essential for the correct description of complex systems in many scientific fields. The price to pay for including them in the calculations, however, is a dramatic increase in the overall computational costs. Recently, deep graph networks have been employed as efficient, data-driven models for predicting properties of complex systems represented as graphs. These models rely on a message passing strategy that should, in principle, capture long-range information without explicitly modeling the corresponding interactions. In practice, most deep graph networks cannot really model long-range dependencies due to the intrinsic limitations of (synchronous) message passing, namely oversmoothing, oversquashing, and underreaching. This work proposes a general framework that \textit{learns to mitigate} these limitations: within a variational inference framework, we endow message passing architectures with the ability to adapt their depth and filter messages along the way. With theoretical and empirical arguments, we show that this strategy better captures long-range interactions, by competing with the state of the art on five node and graph prediction datasets.
\end{abstract}

\section{Introduction}
\label{sec:introduction}
Complex systems, characterized by interacting entities and emergent behavior, are a cornerstone of research in many scientific disciplines. Mathematical models of such systems should consider the effects of both short and long-range interactions between entities, and the latter are often crucial to describe the system's behavior with the highest degree of precision. 
For instance, in computational physics, it is well-known that electrostatic and gravitational interactions decay slowly with distance \citep{campa_physics_2014};
in computational chemistry and material sciences, the accurate modeling of non-local effects, such as non-bonded interactions in molecular systems, is necessary to estimate properties like the free energy \citep{shirts_accurate_2007,piana_evaluating_2012};
in biology, disrupting long-range interactions in mRNA can inhibit slicing \citep{ruegsegger_block_2001}; 
in immunology, the distant interactions between a major histocompatibility complex and regions of the T-cell receptor molecule correlate with their binding process \citep{ferber_tcell_2012}.

Complex systems can be represented as graphs of interacting entities. Modeling long-range interactions often implies that the graph has dense connectivity, meaning the number of interactions is quadratic in the number of entities. Machine learning researchers tried to address these problems by relying on accurate surrogates for computationally demanding simulations \citep{sanchez_learning_2020}. Some of these methods rely on Deep Graph Networks (DGNs) \citep{bacciu_gentle_2020}, deep learning models implement a \textit{message passing} paradigm of computation. In message passing, nodes repeatedly exchange messages with each other to propagate information across the graph and compute their embeddings. More rounds of message passing increase the ``receptive field" of each node.

Despite its long-standing history \citep{sperduti_supervised_1997,micheli_neural_2009,scarselli_graph_2009,bacciu_gentle_2020}, research in graph representation learning has gained more traction in recent years, and there are still many open questions. For instance, it is well-known that most message passing architectures are ineffective at capturing long-range dependencies, thus reducing their impact in the scientific fields mentioned before. Researchers relate this problem to at least three others, namely \textit{oversmoothing} \citep{li_deeper_2018}, \textit{oversquashing} \citep{alon_on_2021,rusch_survey_2023}, and \textit{underreaching} \citep{alon_on_2021}. Briefly, oversmoothing means that the node embeddings of a DGN tend to converge to the same value as the depth increases. In contrast, oversquashing relates to the bottleneck of compressing a (possibly) exponential amount of information from neighboring nodes into a single node embedding. Finally, underreaching refers to DGNs' inability to propagate a node's information to more than $K$ hops away.

This work provides a general framework for improving the ability of \textit{any} message passing architecture to capture long-range dependencies; we extend the general message passing formulation to propagate relevant information across the graph. At the heart of our proposal is the idea to let DGNs learn \textit{how many layers} of message passing to use and \textit{when to send} specific messages. As a matter of fact, one typically observes oversmoothing and oversquashing when too many messages are propagated, hence learning which messages to discard is important. At the same time, solving underreaching requires a sufficient number of message passing rounds to be performed, and it is crucial to learn this information from that task rather than guessing it via expensive grid searches. In light of these characteristics, we call our approach Adaptive Message Passing (\method{}).

Our contributions are multi-faceted. We extend a recent variational framework for unbounded depth networks \citep{nazaret_variational_2022} to the processing of graphs, and we introduce new families of distributions with specific properties to overcome previous limitations. We also propose a soft message filtering scheme to prune irrelevant information for the task at hand and favor the propagation of messages to distant regions of the input graph. Theoretically, we show how to propagate a message unchanged between any two connected nodes in the graph and that filtering decreases the upper bound on sensitivity \citep{topping_understanding_2022}; thus, underreaching and oversquashing can be mitigated. 
Empirically, \method{} significantly and consistently improves the performances of message passing architectures on five well-known node and graph prediction datasets where long-range information is important. Qualitative analyses provide further evidence that \method{} mitigates oversmoothing and oversquashing as well as learning the number of message-passing layers. Finally, we conduct an in-depth study of our approach via ablations and visualizations of the models' predictions.
\section{Related Work}
\label{sec:related-works}
Due to space reasons, we provide an extended literature review in Appendix \ref{sec:extended-related work}.

\vspace{-1em}
\paragraph{Oversquashing.}
There are many methods that attempt to address the oversquashing problem with the goal of better capturing long-range dependencies \citep{alon_on_2021}. Some works learn how a node should completely stop propagating a message in a fixed-depth architecture \citep{spinelli_adaptive_2020} or if it should only listen, isolate, or receive/broadcast its own message \citep{finkelshtein_cooperative_2023}. Similarly, one can learn to sample edges at each message passing layer according to some learned parametrization \citep{hasanzadeh_bayesian_2020} or have a completely asynchronous message passing \citep{faber_asynchronous_2023}. Ordinary differential equation-based message passing approaches, instead, provably preserve information regardless of the depth in the network \citep{gravina_antisymmetric_2023,heilig2025porthamiltonian}.
On the other hand, most graph rewiring methods alter the graph connectivity to increase the node sensitivity \citep{topping_understanding_2022}, which has been theoretically linked to a form of ``topological oversquashing''. Some of them preserve locality and sparsity of the rewiring process \citep{barbero_locality_2023} or perform dynamic rewiring \citep{gutteridge_drew_2023}. Probabilistic approaches to rewiring are based on sampled sub-graphs \citep{qian_probabilistically_2023}. Recently, a critical perspective on the effectiveness of rewiring approaches has also been given \citep{tortorella_leave_2022}.

\paragraph{Oversmoothing.}
One practical mitigation to the oversmoothing problem is dropping edges, reducing the overall flow of messages and avoiding the convergence of all embeddings to the same value \citep{rong_dropedge_2020}. Another possibility is to employ skip/residual connections \citep{kipf_semisupervised_2017,li_deepgcns_2019}. The concatenation of node representations across layers is yet another a way to contain oversmoothing, which has been adopted in neural and probabilistic models to improve the downstream performances on several node and graph-related tasks \citep{bacciu_contextual_2018,xu_representation_2018}.
Instead, an orthogonal research direction considers implicit neural networks for graphs that correspond to infinite-depth models and seem to be able to capture long-range dependencies \citep{poli_continuous_2021,liu_eignn_2021}. These models simulate synchronous message passing with a potentially infinite number of message-propagation steps, and some of them appear to be empirically robust to the oversmoothing problem.

\paragraph{Adaptive Architectures.}
This family of methods tries to learn the models' architecture during training. Our work is inspired by the unbounded depth networks (UDNs) of \citet{nazaret_variational_2022}, who proposed a variational framework for learning the depth in multi-layer perceptrons and convolutional neural networks. In the graph domain, the first approach to learning the depth of a DGN was proposed by \citet{micheli_neural_2009}, who applied the cascade correlation algorithm \citep{fahlman_cascade_1989}. Other works attempted to learn the ``width" of each message passing layer by exploiting Bayesian non-parametric models \citep{castellana_infinite_2022}, which allows to save time and memory when building deeper probabilistic DGNs.
Finally, it is important to notice that these works, including this manuscript, are all orthogonal to the popular field of neural architecture search \citep{zhou_auto_2022}: The former attempts at dynamically modifying the architecture \textit{during learning}, whereas most neural architecture search approaches find smarter ways to carry out a grid search or assume a maximum number of layers. An advantage of adaptive approaches is that they can greatly reduce time and computational costs to perform a hyper-parameter search.

Contrary to all these approaches, AMP uses a variational framework to jointly learn both the depth of the DGN and filter messages passed between nodes in each of these layers.

\section{Adaptive Message Passing}
\label{sec:method}
This section introduces the probabilistic framework of \method{}, which we train with simple backpropagation to optimize a variational lower bound. This bound is composed of a predictive term and two (optional) regularizers over the parameters and the DGN's depth.

\paragraph{Definitions.} We consider directed attributed graphs $g=(\mathcal{V}, \mathcal{E}, \mathcal{X}, \mathcal{A})$, each consisting of a set of nodes $\mathcal{V}=\{1,\dots,n_g\}$ that are connected together via a set of oriented edges $\mathcal{E}=\{(u,v) | u,v \in \mathcal{V}\}$. When a graph is undirected, each edge is converted into two oriented ones, that is, $(u,v)$ and $(v,u)$. The set $\mathcal{X}=\{\mathbf{x}_v \in \mathbb{R}^d | v \in \mathcal{V}\}$ defines the $d$-dimensional attribute vector of each node in the graph, and similarly for the $d'$-dimensional edge attributes belonging to the set $\mathcal{A}=\{\bm{a}_{uv} \in \mathbb{R}^{d'} | (u,v) \in \mathcal{E}\}$. Finally, we define the neighborhood of a node $v$ as the set of incoming edges $\mathcal{N}_v = \{ u | (u,v) \in \mathcal{E}\}$. As outlined in previous works \citep{hammer_universal_2005,bongini_recursive_2018}, each attributed graph can be seen as a realization of \textit{some} random variable (\rv{}) $\mathcal{G}$ with support in the graph domain. Similarly to classical machine learning, we do not have access to the data distribution $p(\mathcal{G})$, rather we are interested in modeling the conditional distribution $p(\mathcal{T}=Y_g | \mathcal{G}=g)$, where $Y_g$ stands for the target value(s) to be predicted.

\paragraph{Multi-output Family of Architectures.} \method{} produces deep graph networks of \textit{potentially infinite depth}, where \textbf{each layer} $\ell$ comprises a message passing operation MP \textbf{and} a readout mapping $\mathcal{R}$ \citep{bacciu_gentle_2020} from node embeddings to the desired output. 

Without loss of generality, a message passing layer $\text{MP}_\ell$ can compute node embeddings $\bm{h}^{\ell}_v, \forall v \in \mathcal{V}$ as follows:
\begin{align}
\bm{h}_v^{\ell} = \phi^{\ell} \left(\bm{h}_v^{\ell-1},\ \Psi(\{\psi^{\ell}(\bm{h}_u^{\ell-1}, \bm{a}_{uv}) | u \in \mathcal{N}_v\} ) \right),
\label{eq:message passing-aggregation}
\end{align}
where $\phi^{\ell}$ and $ \psi^{\ell}$ are learnable functions and $\Psi$ is a permutation invariant function that aggregates the embeddings of $v$'s neighbors computed at the previous layer. When $\ell = 1$, $\bm{h}_v^{1}$ is obtained by applying a learnable transformation of the node $v$'s features $\bm{h}_v^{0}=\bm{x}_v$, and no neighbor aggregation is performed. Instead, the readout mapping $\mathcal{R}^\ell$ depends on the task: If one needs node-wise predictions, then the readout implements a learnable map $\hat{\bm{y}}_v^{\ell} = \rho^{\ell}(\bm{h}_v^{\ell})$ from $\bm{h}_v^{\ell}$ to a node output $\hat{\bm{y}}_v^{\ell}$; On the other hand, in the case of whole-graph predictions, a global aggregation has to be performed first:
\begin{align}
    \hat{\bm{y}}^{\ell} = \rho_2^{\ell}\left(\Phi\left(\{\rho_1^{\ell}(\bm{h}_v^{\ell}) | v \in \mathcal{V} \}\right)\right),
\end{align}
where $\rho_1,\rho_2$ denote learnable functions and $\Phi$ is a global pooling function that aggregates all node representations computed at a given layer $\ell$. 
The learnable functions $\phi^{\ell}, \psi^{\ell}, \rho_1^{\ell},\rho_2^{\ell}$ are typically implemented as 1-hidden layer MLPs parametrized by $\Theta_{\ell}$.

\begin{figure}[t]
    \centering
    \resizebox{0.8\columnwidth}{!}{\tikzset{every picture/.style={line width=0.75pt}} 

\begin{tikzpicture}[x=0.75pt,y=0.75pt,yscale=-1,xscale=1]

\draw [line width=1.5]    (86.33,59.92) .. controls (136.02,21.88) and (232.06,19.34) .. (286.71,41.84) ;
\draw [shift={(290,43.25)}, rotate = 203.96] [fill={rgb, 255:red, 0; green, 0; blue, 0 }  ][line width=0.08]  [draw opacity=0] (11.61,-5.58) -- (0,0) -- (11.61,5.58) -- cycle    ;
\draw [line width=1.5]    (424.33,58.92) -- (338,59.56) ;
\draw [shift={(334,59.59)}, rotate = 359.58] [fill={rgb, 255:red, 0; green, 0; blue, 0 }  ][line width=0.08]  [draw opacity=0] (11.61,-5.58) -- (0,0) -- (11.61,5.58) -- cycle    ;
\draw [line width=1.5]    (309,59.59) -- (223.58,140.14) ;
\draw [shift={(220.67,142.89)}, rotate = 316.68] [fill={rgb, 255:red, 0; green, 0; blue, 0 }  ][line width=0.08]  [draw opacity=0] (11.61,-5.58) -- (0,0) -- (11.61,5.58) -- cycle    ;
\draw [line width=1.5]    (424.33,58.92) -- (342.83,97.25) -- (232.94,149.83) ;
\draw [shift={(229.33,151.56)}, rotate = 334.43] [fill={rgb, 255:red, 0; green, 0; blue, 0 }  ][line width=0.08]  [draw opacity=0] (11.61,-5.58) -- (0,0) -- (11.61,5.58) -- cycle    ;
\draw [line width=1.5]    (86.33,59.92) -- (185.43,142.69) ;
\draw [shift={(188.5,145.25)}, rotate = 219.87] [fill={rgb, 255:red, 0; green, 0; blue, 0 }  ][line width=0.08]  [draw opacity=0] (11.61,-5.58) -- (0,0) -- (11.61,5.58) -- cycle    ;
\draw [line width=1.5]    (205,59.92) -- (280,59.61) ;
\draw [shift={(284,59.59)}, rotate = 179.76] [fill={rgb, 255:red, 0; green, 0; blue, 0 }  ][line width=0.08]  [draw opacity=0] (11.61,-5.58) -- (0,0) -- (11.61,5.58) -- cycle    ;
\draw [line width=1.5]    (205,59.92) -- (205.95,134.92) ;
\draw [shift={(206,138.92)}, rotate = 269.27] [fill={rgb, 255:red, 0; green, 0; blue, 0 }  ][line width=0.08]  [draw opacity=0] (11.61,-5.58) -- (0,0) -- (11.61,5.58) -- cycle    ;
\draw  [fill={rgb, 255:red, 255; green, 255; blue, 255 }  ,fill opacity=1 ][line width=1.5]  (60.33,59.92) .. controls (60.33,46.12) and (71.53,34.92) .. (85.33,34.92) .. controls (99.14,34.92) and (110.33,46.12) .. (110.33,59.92) .. controls (110.33,73.73) and (99.14,84.92) .. (85.33,84.92) .. controls (71.53,84.92) and (60.33,73.73) .. (60.33,59.92) -- cycle ;
\draw  [fill={rgb, 255:red, 31; green, 119; blue, 180 }  ,fill opacity=1 ][line width=1.5]  (181,163.92) .. controls (181,150.12) and (192.19,138.92) .. (206,138.92) .. controls (219.81,138.92) and (231,150.12) .. (231,163.92) .. controls (231,177.73) and (219.81,188.92) .. (206,188.92) .. controls (192.19,188.92) and (181,177.73) .. (181,163.92) -- cycle ;
\draw  [fill={rgb, 255:red, 31; green, 119; blue, 180 }  ,fill opacity=1 ][line width=1.5]  (180,59.92) .. controls (180,46.12) and (191.19,34.92) .. (205,34.92) .. controls (218.81,34.92) and (230,46.12) .. (230,59.92) .. controls (230,73.73) and (218.81,84.92) .. (205,84.92) .. controls (191.19,84.92) and (180,73.73) .. (180,59.92) -- cycle ;
\draw  [fill={rgb, 255:red, 255; green, 255; blue, 255 }  ,fill opacity=1 ][line width=1.5]  (399.33,58.92) .. controls (399.33,45.12) and (410.53,33.92) .. (424.33,33.92) .. controls (438.14,33.92) and (449.33,45.12) .. (449.33,58.92) .. controls (449.33,72.73) and (438.14,83.92) .. (424.33,83.92) .. controls (410.53,83.92) and (399.33,72.73) .. (399.33,58.92) -- cycle ;
\draw  [fill={rgb, 255:red, 255; green, 255; blue, 255 }  ,fill opacity=1 ][line width=1.5]  (284,59.59) .. controls (284,45.78) and (295.19,34.59) .. (309,34.59) .. controls (322.81,34.59) and (334,45.78) .. (334,59.59) .. controls (334,73.4) and (322.81,84.59) .. (309,84.59) .. controls (295.19,84.59) and (284,73.4) .. (284,59.59) -- cycle ;
\draw  [line width=1.5]  (164,32) .. controls (164,22.06) and (172.06,14) .. (182,14) -- (345.33,14) .. controls (355.27,14) and (363.33,22.06) .. (363.33,32) -- (363.33,179.89) .. controls (363.33,189.83) and (355.27,197.89) .. (345.33,197.89) -- (182,197.89) .. controls (172.06,197.89) and (164,189.83) .. (164,179.89) -- cycle ;
\draw  [line width=1.5]  (13,28.02) .. controls (13,21.93) and (17.93,17) .. (24.02,17) -- (139.4,17) .. controls (145.49,17) and (150.42,21.93) .. (150.42,28.02) -- (150.42,118.54) .. controls (150.42,124.62) and (145.49,129.56) .. (139.4,129.56) -- (24.02,129.56) .. controls (17.93,129.56) and (13,124.62) .. (13,118.54) -- cycle ;

\draw (85.33,59.92) node  [font=\Large]  {$\Theta _{\ell }$};
\draw (206,163.92) node  [font=\Large,color={rgb, 255:red, 255; green, 255; blue, 255 }  ,opacity=1 ]  {$\mathcal{T}_{i}$};
\draw (301.76,183.92) node  [font=\Large] [align=left] {$\displaystyle i=1\dotsc |\mathcal{D} |$};
\draw (204,59.92) node  [font=\Large,color={rgb, 255:red, 255; green, 255; blue, 255 }  ,opacity=1 ]  {$\mathcal{G}_{i}$};
\draw (424.33,58.92) node  [font=\Large]  {$\mathcal{L}$};
\draw (309,59.59) node  [font=\Large]  {$\mathcal{F}_{i}$};
\draw (69.97,115.01) node  [font=\Large] [align=left] {$\displaystyle \ell =1\dotsc \infty $};

\end{tikzpicture}}
    \caption{The graphical model of \method{}, where white and blue circles denote, respectively, latent and observable random variables. 
    $\Theta_{\ell}$ is the \rv{} over the parameters of layer $k$, $\mathcal{F}_i$ defines a distribution over the message filters, $\mathcal{L}$ implements a distribution over the layers of the architecture, while $\mathcal{G}_i$ and $\mathcal{T}_i$ are distributions over the (observable) input graph and the target label, respectively.}
    \label{fig:graphical-model}
\end{figure}

\paragraph{Variational Inference}
Given a dataset $\mathcal{D}$ of $|\mathcal{D}|$ \textit{i.i.d.}\ graphs, we seek to maximize the log-likelihood
\begin{align}
    \ln p(Y|G)=\ln\prod_i^{|\mathcal{D}|}p(Y_{g_i}|g_i)=\sum_i^{|\mathcal{D}|}\ln p(Y_{g_i}|g_i).
\end{align}
One usually assumes the existence of a set of latent variables $Z$ such that $p(Y|G)=\int p(Y,Z|G)dZ$. By designing a proper graphical model, which encodes conditional independence assumptions, we define how to compute $p(Y,Z|G)$; however, the integral to maximize often remains intractable. Therefore, one approach is to turn to variational inference \citep{jordan_introduction_1999,blei_variational_2017}, which maximizes the Expected Lower Bound (ELBO) instead. In particular, by arbitrarily defining a distribution $q(Z|G)$, one has
\begin{align}
    \ln p(Y|G) \geq \mathbb{E}_{q(Z|G)}\left[\ln \frac{p(Y,Z| G)}{q(Z| G)}\right].
\end{align}
Below, we show how we define the joint distribution $p(Y,Z|G)$ as well as the variational distribution $q(Z|G)$ in the specific context of \method{}.

\paragraph{\method{} Formulation.} Figure \ref{fig:graphical-model} represents the graphical model associated with \method{}, where white and blue circles represent latent and observed \rv{}s, respectively. We extend the formulation of \citet{nazaret_variational_2022} to the domain of graphs by modeling the message filtering strategy: The variable $\Theta_{\ell}$ follows a distribution over the parameters of layer $\ell$ of an infinite-depth network, and $\mathcal{L}$ follows a distribution over layers $L \in \mathbb{N}^*$ and is used to truncate the network to a finite depth $L$. For the  $i$-th graph $g_i$, $\mathcal{F}_i$ follows a distribution over soft message filters $\bm{F}_i \in [0,1]^{|\mathcal{V}| \times L \times d}$. In particular, given a node $v$ and a layer $\ell$, the $d$-dimensional vector $\bm{F}_i(v,\ell)$ specifies how much of $h^{\ell}_v$ has to be propagated through the outgoing edges in the next message passing layer. The generative model is
\begin{align}
& \bm{\theta} \sim p(\Theta) = \prod^{\infty}_{\ell=1}p(\Theta_{\ell}) \\    
& L \sim p(\mathcal{L}) \\    
& \bm{F}_i | \mathcal{}g_i, L, \bm{\theta} \sim p(\mathcal{F}_i | \mathcal{}g_i, L, \bm{\theta}) \\    
& Y_{g_i} | g_i, \bm{F}_i, L, \bm{\theta} \sim p(\mathcal{T}_i; \ \Omega_L(g_i, \bm{F}_i, \bm{\theta})),
\end{align}
with $\Omega_L$ being the infinite DGN truncated at depth $L$ whose output parametrizes the target distribution. This means that the joint distribution decomposes as 
\begin{align}
    & p(Y,\bm{\theta},\bm{F}_i,L|G)= \nonumber \\ & = p(\bm{\theta})p(L)\prod_i^{|\mathcal{D}|}p(Y_{g_i}|\bm{\theta},\bm{F}_i,L,g_i)p(\mathcal{F}_i | \mathcal{}g_i, L, \bm{\theta})
\end{align}
Note that the independence of the priors is key for an efficient approximation of the posterior distribution \cite{nazaret_variational_2022}.

In Figure \ref{fig:overview}, we visually represent the effect that message filtering has on the propagation of messages across DGN layers. A graph of seven nodes (a) is provided and the message filtering scheme (b) has been discretized in the interest of simplicity. For instance, node $1$ will send its message only at message passing layer $1$, nodes $2$ \& $3$ will never send a message, and node $4$ will send a message only at layer $2$. Compared to the standard message passing (c), where all nodes send their messages at each layer, \method{} implements a learnable filtering (d), where a subset of all possible messages is propagated at each layer in a way that depends on the task to be solved. In Section \ref{sec:discussion-oversquashing-underreaching-oversmoothing}, we discuss the implications of this adaptive message filtering scheme in mitigating the well-known issues of oversquashing, underreaching, and oversmoothing. Notably, message filtering does not introduce a significant computational burden since it has linear complexity in the number of nodes.

\begin{figure}[t]
    \centering
    \resizebox{1\columnwidth}{!}{\input{images/overview}}
    \caption{Given an input graph (a) and a discrete message filtering scheme (b), we observe how a $L$=2-layer standard message passing (c) differs from \method{} (d) in terms of the number of messages sent. Please refer to the text for more details.} 
    \label{fig:overview}
\end{figure}

\paragraph{Choice of the Variational Distributions.} 
We now need to define the learnable variational distribution $q(\bm{\theta}, L, \bm{F}_i, | g_i, Y_{g_i})$. We assume it factorizes as $q(\bm{\theta} | L ; \bm{\nu})q(L ; \bm{\lambda})q(\bm{F}_i | g_i, L, \bm{\theta})$, where $\bm{\nu}$, $\bm{\lambda}$ are learnable parameters. We also assume that the variational posterior does not depend on $Y_{g_i}$ (so we can drop the term) to allow for predictions on unseen graphs. Below, we describe how to compute each factor so that the computation of the ELBO is tractable.

The distribution $q(L ; \bm{\lambda})$ has to belong to an \textit{unbounded with bounded and connected members'} family (see Definition \ref{def:truncated-family} in Section \ref{sec:extension-truncated}). In short, since the support of each distribution $q$ in the family is bounded, we can compute its expectation $\mathbb{E}_{q(L ; \bm{\lambda})}[f(L)]$ as the sum $\sum_{\ell \in \mathrm{support}(q)}q(\ell)f(\ell)$ for any function $f$. In Appendix \ref{sec:extension-truncated}, we extend the original treatment of Poisson distributions to Gaussians and mixtures of distributions.
Second, we define $q(\bm{\theta} | L ; \bm{\nu})$ such that we cannot make any statement about the layers greater than $L$ \citep{kurihara_collapsed_2007}:
\begin{align}
    q(\bm{\theta} | L ; \bm{\nu}) = q(\bm{\theta}_{1:L} ; \bm{\nu}_{1:L}) \prod_{\ell=L+1}^{\infty}p(\bm{\theta}_\ell)
\end{align}
and $p(\bm{\theta}_{\ell})$ can be, for instance, a Gaussian prior. We also fix $q(\bm{\theta}_{1:L} ; \bm{\nu}_{1:L}) = \prod_{\ell=1}^L \mathcal{N}(\bm{\theta}_{\ell} ; \bm{\nu}_{\ell}, \bm{I})$.

Finally, we define $q(\bm{F}_i | g_i, L, \bm{\theta})$ as a Dirac delta function $\delta_{\bm{F}_i}$ whose parameters $\bm{F}_i \in [0,1]^{|\mathcal{V}| \times L \times d}$ are computed by a function $f(g_i)$. Choosing the delta function makes the computation of its expectation straightforward, but other choices can, in principle, be made. We propose two versions of the function $f(g_i)$, whose choice is left as a hyper-parameter: the first computes $\bm{F}_i(v,\ell) = f_{\ell}(\bm{x}_v)$ and the second computes $\bm{F}_i(v,\ell) = f_{\ell}(\bm{h}^{\ell}_v)$, where $f_{\ell}$ is a Multi-Layer Perceptron (MLP) with sigmoidal activations. In other words, a node's outgoing messages will be filtered according to either the input features of that node or its embedding at layer $\ell$. Given $\bm{F}_i \sim q(\bm{F}_i | g_i, L, \bm{\theta})$, we extend Equation \ref{eq:message passing-aggregation} to apply such filtering:
\begin{align}
\bm{h}_v^{\ell} & = \phi^{\ell} (\bm{h}_v^{\ell-1}, \nonumber \\
& \Psi(\{\bm{F}_i(u,\ell-1) \odot \psi^{\ell}(\bm{h}_u^{\ell-1}, \bm{a}_{uv}) | u \in \mathcal{N}_v\} ) ),
\label{eq:filtered-message passing-aggregation}
\end{align}
with $\odot$ being the element-wise product. Such message filtering is similar in spirit to many works \citep{franceschi_learning_2019,spinelli_adaptive_2020,finkelshtein_cooperative_2023}, but our approach does not require gradient approximations caused by discrete operations and is fully differentiable. 

\paragraph{Computation of the ELBO.} Our choice of the variational distributions allows us to compute the ELBO efficiently and maximize it using backpropagation \citep{rumelhart_learning_1986}. In particular, we write (the full derivation is in Appendix \ref{sec:elbo-derivation})
\begin{align}
& \ln p(Y|G) \ge \sum_{\ell=1}^{\hat{L}}q(\ell ; \bm{\lambda})\Biggl[\ln\frac{p(\ell)}{q(\ell ; \bm{\lambda})} + \ln\frac{p(\bm{\nu})}{q(\bm{\nu} | \ell ; \bm{\nu})} \nonumber \\ 
& + \sum_i^{|\mathcal{D}|}\left[\ln\frac{p(\bm{F}_i)}{q(\bm{F}_i | g_i, \ell, \bm{\nu})} + \ln p(Y_{g_i} | \ell, \bm{F}_i, \bm{\nu}, g_i)\right]\Biggr],
\end{align}
where $\hat{L}=\mathrm{support}(q(L))$, $p(L)$ is a prior over layers, such as a Poisson distribution, and $p(\bm{F}_i)$ is a prior over all possible message filtering schemes (\textit{uninformative in this work}, so the term cancels). The second equivalence relies on the specific properties of the variational distributions and on the approximation of expectation $\mathbb{E}_{q(\bm{\theta} | L ; \bm{\nu})}[f(\bm{\theta}_{1:L})])$, for a function $f$, at the first order\footnote{First-order second-moment method (FOSM) of probability.} with $f(\mathbb{E}_{q(\bm{\theta} | L ; \bm{\nu})}[\bm{\theta}_{1:L}]) = f(\bm{\nu}_{1:L})$ as in \citet{nazaret_variational_2022}.

Akin to \citet{nazaret_variational_2022}, \method{} makes predictions about a new graph $g_j$ as:
\begin{align}
    p(Y_{g_j} | g_j) & \approx \mathbb{E}_{q(\bm{\theta}, L, \bm{F}_j, | g_j)}\left[p(Y_{g_j} ; \Omega_L(g_j, \bm{F}_j, \bm{\theta}) \right] \\
    & \approx \sum_{\ell=1}^{\hat{L}}q(\ell ; \bm{\lambda}) p(Y_{g_j} ; \Omega_{\ell}(g_j, \bm{F}_j, \bm{\nu})).
\end{align}
In other words, using the fact that $q(\ell;\bm{\lambda})$ has bounded support up to $\hat{L}$, we obtain the prediction as the weighted sum of the $\hat{L}$ output layers of the DGN, and the variational distribution $q(\ell;\bm{\lambda})$ over layers provides said weights. 

We now show that there is a direct relation between the ability to filter out messages of Equation \ref{eq:filtered-message passing-aggregation} and the upper bound on the Jacobian sensitivity discussed in \citet{di_over_2023}.
\begin{theorem}
    \label{th:upper-bound}
    For \method{} with $m$ layers and $u, v \in \mathcal{V}$,    
    \[
    \left\| \frac{\partial h_v^{(m)}}{\partial h_u^{(0)}} \right\|_{L^1} 
    \leq d \left( \left( c_{\mathrm{up}} \left( c_{\mathrm{rs}} I + c_{\mathrm{mp}} \left( c_F k_h + k_F \right) A \right) \right)^m \right)_{vu}
    \]
    Here, MPNN is in the following form 
    \[
    h_v^{\ell} = \mathrm{up}\left( \mathrm{rs}(h_v^{\ell-1}) + \mathrm{mp}(\sum_u A_{vu} F(h_u^{\ell-1}) \odot h_u^{\ell-1}) \right)
    \]
    where $\mathrm{up}, \mathrm{rs},$ and $\mathrm{mp}$ are Lipschitz functions as in \citet{di_over_2023} with constants $c_{\mathrm{up}}, c_{\mathrm{rs}}, c_{\mathrm{mp}}$, $c_F$ is the upper bound of the entry-wise $L^ 1$ matrix norm of $\frac{\partial F}{\partial x}$ for the filtering function $F$, $k_h$ is the maximal absolute value among entries of $h$, and similarly $k_F$ for the output of $F$. 
\end{theorem}
\begin{proof}
    The proof is provided in Appendix \ref{sec:discussion-oversquashing-underreaching-oversmoothing}.
\end{proof}

Note that $k_F \leq 1$; if we consider for simplicity a constant filtering function, namely $c_F=0$ and we filter enough, meaning $k_F<1$, then filtering will decrease the sensitivity’s upper bound. At the same time, this helps to reduce the amount of messages that get squashed into a fixed size vector \citet{alon_on_2021}, contradicting the widely accepted notion that “improving sensitivity mitigates oversquashing”. Please consult Appendix \ref{sec:discussion-oversquashing-underreaching-oversmoothing} for a more detailed discussion on this matter.

\paragraph{Practical Considerations.}
The depth of \method{} varies dynamically; in particular, the support of the distribution $q(\ell ; \bm{\lambda})$ is obtained by truncating it as the quantile function evaluated at $0.99$ (in our experiments). Whenever the quantile threshold shifts, we either grow or shrink the DGN by instantiating a new message-passing layer and increasing the output dimension of the function $f(g_i)$ that produces $\bm{F}_i$. When shrinking the DGN, we can retain the excess layers to account for future expansions or delete them; here we opt for the retention strategy. Importantly, the depth is not a hyper-parameter to be tuned anymore. 
While \method{} requires choosing a family of truncated distributions $q(L)$ and a proper initialization, it is generally believed that this has a smaller effect on the final result \citep{goel_information_1981,bernardo_bayesian_2009}. 
Also, setting uninformative priors works well in our experiments but they are convenient way to penalize the computational costs of deeper networks.

We conclude with a theorem on \method{}'s ability to propagate a message unchanged from two connected nodes in a graph, which would not be possible on classical (synchronous) message-passing neural networks. However, achieving such behavior in practice might be difficult.

\begin{theorem}[Short Version]
For each graph $g$, a source node $v$ and a destination node $u$, there exists a parametrization of AMP and a depth $K$ such that $\bm{h}_{u}^{K} = \bm{h}_{v}^{0}=\bm{x}_v$.
\label{th:reachability}    
\end{theorem}
\vspace{-2.5em}
\begin{proof}
The proof is provided in Appendix \ref{sec:discussion-oversquashing-underreaching-oversmoothing}, and Figure \ref{fig:theorem-sketch} sketches the process formalized in the theorem.
\end{proof}

\subsection{Computational Considerations}
The cost of filtering messages is $\mathcal{O}(|\mathcal{V}|)$. Therefore, the message passing operation is not altered significantly, since it has a cost of $\mathcal{O}(|\mathcal{V}| + |\mathcal{E}|)$. However, the additional burden introduced by \method{}, compared to classical message-passing architectures, is the layer-wise readout that we implemented as an MLP. Classical MPNNs employ a single readout with cost $\mathcal{O}(|\mathcal{V}|)$ or $\mathcal{O}(1)$ depending on the task nature, whereas we use one per layer, so we have $\mathcal{O}(|\mathcal{V}|L)$ and $\mathcal{O}(|\mathcal{V}|L)$ respectively. In terms of training costs, we employ standard backpropagation with at most two light-weight additional regularizers, which is not so different from classical approaches.
\section{Experimental Details}
\label{sec:experiments}
We evaluate \method{} on two sets of tasks, both requiring the ability to capture long-range interactions.\footnote{https://github.com/nec-research/Adaptive-Message-Passing} Additional node classifcation results can be found in Appendix \ref{sec:node-classification}.

\paragraph{Synthetic Datasets}
We consider the tasks of predicting the diameter, the single-source shortest paths (SSSP), and the node eccentricity on synthetic graphs \citep{corso_principal_2020}. In particular, we closely follow the setup of \citet{gravina_antisymmetric_2023} with graph sizes ranging from 25 to 35 nodes, topologies sampled from different graph generators, and each node has one random (sampled from a Normal distribution) feature attached. For SSSP, a binary feature is added to each node to indicate whether it is the source node in the graph or not. Each dataset amounts to 7040 graphs split into 5120 for training, 640 for validation, and 1280 for testing. The metric to be optimized is the $\log_{10}$ of the mean squared error (MSE). We apply early stopping on the validation MSE. \\
We have observed that the performance reported in \citet{gravina_antisymmetric_2023} can be improved by a significant margin if we average results over 20 rather than four final (that is, after model selection) training runs and increase the patience of the early stopper from 100 to 300, giving models more time to converge to a good solution. Therefore, to ensure a more robust set of results, we re-evaluated all baselines\footnote{We received support from the authors of paper.} considering these changes, and in many cases, we improved the scores. We combine \method{} with three message passing architectures, namely the Graph Convolutional Network (GCN) \citep{kipf_semisupervised_2017}, the Graph Isomorphism Network (GIN) \citep{xu_how_2019}, and the Anti-Symmetric DGN (ADGN) \citep{gravina_antisymmetric_2023}, and in addition we compare against GAT \citep{velickovic_graph_2018}, GraphSAGE \citep{hamilton_inductive_2017}, GCNII \citep{chen2020simple}, DGC \citep{wang2021dissecting}, and GRAND \citep{chamberlain2021grand}. Hyper-parameter details can be found in Appendix \ref{sec:hyper-params}.

\paragraph{Chemical Datasets}
We also test \method{} on real-world chemical graph prediction benchmarks, taken from the Long Range Graph Benchmark, called \textit{peptides-func} and \textit{peptides-struct} \citep{dwivedi_long_2022}. The first is an imbalanced multi-label graph classification dataset with ten total peptide functions, and we measure performances using the average precision (AP). The second is a multi-label graph regression task where we want to predict the peptides' properties based on their 3D information, and one evaluates the mean absolute error (MAE). Both datasets contain 15535 peptides with approximately $150$ nodes each, and the data is split into 70~\% for training, 15~\% for validation, and 15~\% for testing. We apply early stopping on the validation MAE. \\
We rely on the fair re-evaluation of \citet{tonshoff_did_2023} that shows how simple baselines like a GCN can achieve very competitive performances when properly tuned. In addition, we follow previous works \citep{rampavsek_recipe_2022,tonshoff_did_2023} and add random-walk structural encodings for \textit{peptides-func} and Laplacian positional encodings for \textit{peptides-struct}. For completeness, we include results from \citet{dwivedi_long_2022}, its re-evaluation \citep{tonshoff_did_2023}, and other results such as CRaWL \citep{toenshoff_graph_2021}, DRew \citep{gutteridge_drew_2023}, Exphormer \citep{shirzad_exphormer_2023}, GRIT \citep{ma_grit_2023}, Graph ViT and G-MLPMixer \citep{he_generalization_2023}, LASER \citep{barbero_locality_2023}, CO-GNN \citep{finkelshtein_cooperative_2023}, NBA \citep{park_non_2023}, PH-DGN \cite{heilig2025porthamiltonian}, GRED \citep{ding_recurrent_2023}, PR-MPNN \citep{qian_probabilistically_2023} and IPR-MPNN \citep{qian_probabilistic_2024}.

We evaluate \method{} on GCN, GINE \citep{hu_strategies_2020}, and GatedGCN \citep{bresson_residual_2017}; our grid search follows the best hyper-parameter reported by \citet{tonshoff_did_2023} (except the depth). As above, details on \method{}'s hyper-parameters can be found in Appendix \ref{sec:hyper-params}. Because the optimal depth ultimately depends on the task and the specific configuration of the model, we cannot impose arbitrary restrictions on the number of total parameters as done in \citet{dwivedi_long_2022}; instead, we are interested in letting the model freely adapt and choose the best parametrization that maximizes the performance.

\section{Results}
\label{sec:results}

\subsection{Quantitative Results}
\label{subsec:quantitative-results}
\begin{table*}[t]
\centering
\scriptsize
\begin{tabular}{lcccccc}
\toprule
             & Diameter & Rel Imp & SSSP & Rel Imp & Eccentricity & Rel Imp \\ \midrule
\textsc{GCN}          & $ 0.6146\pm{0.0375}$  & & $0.9132 \pm{0.0051}$   & & $0.7398\pm{0.0705}$  &\\
\textsc{GAT}          & $ 1.4367\pm{0.3558}$  & & $0.6070 \pm{0.0375}$   & & $1.0714\pm{0.0616}$  &\\
\textsc{GraphSAGE}    & $ 0.6146\pm{0.0744}$  & & $-1.0139\pm{0.0120}$   & & $ 1.0859\pm{0.0001}$ & \\
\textsc{GIN}          & $ 0.2408\pm{0.0154}$  & & $-0.2648\pm{0.4437}$   & & $ 0.9229\pm{0.0002}$ & \\ 
\textsc{GCNII}        & $ 0.5057\pm{0.0309}$  & & $-0.9172\pm{0.4396}$   & & $ 0.7112\pm{0.0255}$ & \\
\textsc{DGC}          & $0.5601\pm{0.0220}$   & & $-0.0254\pm{0.0077}$   & & $0.8051\pm{0.0017}$  &\\
\textsc{GRAND}        & $0.9477\pm{0.2160}$   & & $0.1909\pm{0.3103}$    & & $0.7450\pm{0.1369}$ & \\
\textsc{ADGN}         & $-0.4530 \pm{0.0883}$ & & $-3.5448\pm{0.2749}$   & & 0.0547$ \pm{0.0732}$ & \\ \hline
\method{}$_{\textsc{GCN}}$  & $-0.1072^{\dagger} \pm{0.0791}$ & \cellcolor{rwth-green!90}-81\% &  $0.5440^{\dagger} \pm{0.0108}$ & \cellcolor{rwth-green!90}-57\% & $0.6054^{\dagger} \pm{0.0919}$ & \cellcolor{rwth-green!60}-26\%\\
\method{}$_{\textsc{GIN}}$  & $-0.4874^{\dagger} \pm{0.1111}$ & \cellcolor{rwth-green!90}-81\% &  $-3.0628^{\dagger}\pm{0.3159}$ & \cellcolor{rwth-green!90}-99\% & $0.4093^{\dagger} \pm{0.0546}$ & \cellcolor{rwth-green!90}-69\% \\
\method{}$_{\textsc{ADGN}}$ &  $-0.5891^{\dagger} \pm{0.0720}$ & \cellcolor{rwth-green!60}-27\% & $-3.9579^{\dagger} \pm{0.0769}$ & \cellcolor{rwth-green!90}-61\% &  $0.0515^{\dagger} \pm{0.1819}$ & \cellcolor{rwth-green!20}-1\% \\ \hline 
Avg Rel Imp & & \cellcolor{rwth-green!90}-63\% & & \cellcolor{rwth-green!90}-72\% & & \cellcolor{rwth-green!60}-32\%\\ \bottomrule
\end{tabular}
\caption{Mean $log_{10}(\mathrm{MSE})$ and standard deviation averaged over 20 final runs on Diameter, SSSP, and Eccentricity. A $\dagger$ indicates that \method{} yields an improvement in the mean score compared to the base model.}
\label{tab:Syntethic-datasets-300epochs}
\end{table*}
\begin{table}[t]
\centering
\scriptsize
\begin{tabular}{lcc}
\toprule
Method     & peptides-func & peptides-struct \\
           & Test AP $\uparrow$ & Test MAE $\downarrow$ \\
\midrule
\multirow{6}*{\rotatebox{90}{\textsc{LRGB}}}
\ \ \textsc{GCN}         & $0.5930 \pm 0.0023$ & $0.3496 \pm 0.0013$ \\
\ \ \ \ \ \ \textsc{GINE}        & $0.5498 \pm 0.0079$ & $0.3547 \pm 0.0045$ \\
\ \ \ \ \ \ \textsc{GatedGCN}    & $0.6069 \pm 0.0035$ & $0.3357 \pm 0.0006$ \\
\ \ \ \ \ \ \textsc{Transformer} & $0.6326 \pm 0.0126$ & $0.2529 \pm 0.0016$ \\
\ \ \ \ \ \ \textsc{SAN}         & $0.6439 \pm 0.0075$ & $0.2545 \pm 0.0012$ \\
\ \ \ \ \ \ \textsc{GPS}         & $0.6535 \pm 0.0041$ & $0.2500 \pm 0.0005$ \\
\midrule
\multirow{4}*{\rotatebox{90}{\textsc{Re-Eval}}}
\ \ \textsc{GCN}         & $0.6860 \pm 0.0050$ & $0.2460 \pm 0.0007$ \\
\ \ \ \ \ \ \textsc{GINE}        & $0.6621 \pm 0.0067$ & $0.2473 \pm 0.0017$ \\
\ \ \ \ \ \ \textsc{GatedGCN}    & $0.6765 \pm 0.0047$ & $0.2477 \pm 0.0009$ \\
\ \ \ \ \ \ \textsc{GPS}         & $0.6534 \pm 0.0091$ & $0.2509 \pm 0.0014$ \\
\midrule
\multirow{13}*{\rotatebox{90}{\textsc{Others}}}
\ \ {\textsc{CRaWl}}  & $0.7074 \pm 0.0032$ & $0.2506 \pm 0.0022$ \\
\ \ \ \ \ \ \textsc{DRew}$_{\textsc{GCN}}$           & $0.7150 \pm 0.0044$ & $0.2536 \pm 0.0015$ \\
\ \ \ \ \ \ \textsc{DRew}$_{\textsc{GatedGCN}}$     & $0.6977 \pm 0.0026$ & $0.2539 \pm 0.0007$ \\
\ \ \ \ \ \ \textsc{Exphormer}         & $0.6527 \pm 0.0043$ & $0.2481 \pm 0.0007$ \\
\ \ \ \ \ \ \textsc{GRIT}              & $0.6988 \pm 0.0082$ & $0.2460 \pm 0.0012$ \\
\ \ \ \ \ \ \textsc{Graph ViT}         & $0.6942 \pm 0.0075$ & $0.2449 \pm 0.0016$ \\
\ \ \ \ \ \ \textsc{G-MLPMixer}        & $0.6921 \pm 0.0054$ & $0.2475 \pm 0.0015$ \\
\ \ \ \ \ \ \textsc{LASER}             & $0.6440 \pm 0.0010$ & $0.3043 \pm 0.0019$ \\
\ \ \ \ \ \ \textsc{CO-GNN}            & $0.6990 \pm 0.0093$ & - \\
\ \ \ \ \ \ \textsc{NBA}$_{\textsc{GCN}}$           & $0.7207 \pm 0.0028$ & $0.2472 \pm 0.0008$  \\
\ \ \ \ \ \ \textsc{NBA}$_{\textsc{GatedGCN}}$           & $0.6982 \pm 0.0014$ & $0.2466 \pm 0.0012$  \\
\ \ \ \ \ \ \textsc{PH-DGN}           & $0.7012 \pm 0.0045$ & $0.2465 \pm 0.0020$ \\
\ \ \ \ \ \ \textsc{GRED}           & $0.7041 \pm 0.0049$ & $0.2584 \pm 0.0015$ \\
\ \ \ \ \ \ \textsc{PR-MPNN}           & $0.6825 \pm 0.0086$ & $0.2477 \pm 0.0005$ \\
\ \ \ \ \ \ \textsc{IPR-MPNN}           & $0.7210 \pm 0.0039$ & $0.2422 \pm 0.0007$ \\
\midrule
\method{}$_{\textsc{GCN}}$ & $0.7161^\dagger \pm 0.0047$ & $0.2446^\dagger \pm 0.0026$ \\
\method{}$_{\textsc{GINE}}$ & $0.7065^\dagger \pm 0.0105 $ & $0.2468^\dagger \pm 0.0026$ \\
\method{}$_{\textsc{GatedGCN}}$ &  $0.6943^\dagger \pm 0.0046 $ & $0.2480^\dagger \pm 0.0012$\\
\bottomrule
\end{tabular}
\caption{Mean test scores and standard deviation averaged over 4 final runs on the chemical datasets. The $\dagger$ indicates that \method{} yields an improvement compared to the base architecture.}
\label{tab:results-chemical-datasets}
\end{table}
Table \ref{tab:Syntethic-datasets-300epochs} reports the test scores for the Diameter, SSSP, and Eccentricity datasets for all baselines and \method{} versions. On all datasets, \method{} always grants a reduction of the test error, with an average improvement of 63~\% on Diameter, 72~\% on SSSP, and 32~\% on Eccentricity. These results show that learning the proper depth of a network and a policy for filtering messages exchanged between nodes is more effective than relying on a manually crafted grid search and a fully synchronous message passing behavior. Eccentricity is the most difficult task to solve, whereas one could claim that SSSP is almost solved for the graphs considered. We achieved the greatest reduction in error with respect to the GIN model, probably because the authors in \citet{gravina_antisymmetric_2023} found that a 1-layer GIN was the best configuration across all tasks after tuning the depth. This stresses the positive impact of letting the model learn how and when to propagate messages.
On the chemical datasets (Table \ref{tab:results-chemical-datasets}), we observe a similar trend. Regardless of the base message passing architecture, \method{} consistently improves its performance on classification and regression tasks. On \textit{peptides-func}, we achieve an improvement of 2 to 3.4 \% compared to the base models and a reduction of MAE on \textit{peptides-struct} that positions all \method{} versions at state-of-the-art levels (considering overlap of standard deviations). Our analyses also found that the number of hidden units is an important hyper-parameter to perform well on these tasks, and a larger value seems to correlate well with good performances. Combined with the above results, we argue that the parameter budget imposed by previous works \citep{dwivedi_long_2022} might limit the future progress on these tasks, as deeper networks are probably needed (we provide an analysis of the depth found by \method{} below). The average diameter of these peptides is $57$, meaning that using ten layers as done in other works might not be enough to capture long-range dependencies \citep{tonshoff_did_2023}. We provide a visual analysis of predictions in Appendix \ref{sec:results-predictions}.

\subsection{\method{} mitigates oversmoothing and oversquashing}
\label{subsec:results-oversmoothing-oversquashing}
\begin{figure}[th]
    \centering
    \includegraphics[width=\columnwidth]{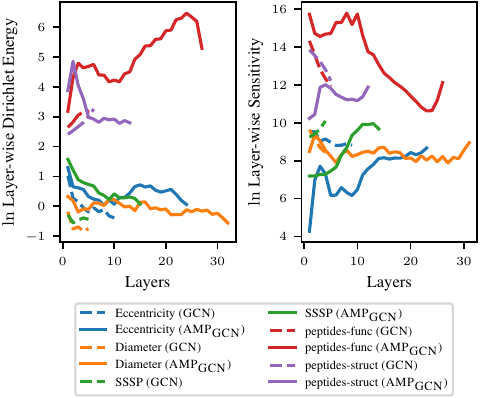}
    \caption{We show the Dirichlet energy (left) and the sensitivity (right) across layers for the GCN model and its \method{} version.}
    \label{fig:oversmoothing-oversquashing-analysis}
\end{figure}
\begin{figure*}[t]
    \centering
    \begin{subfigure}
        {\includegraphics{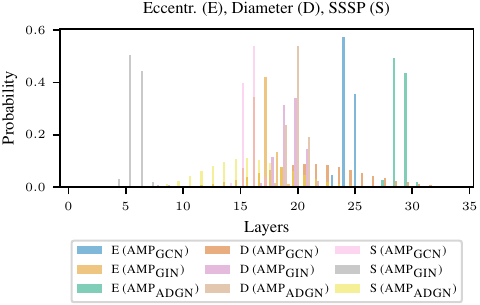}}
    \end{subfigure}
    \begin{subfigure}
        {\includegraphics{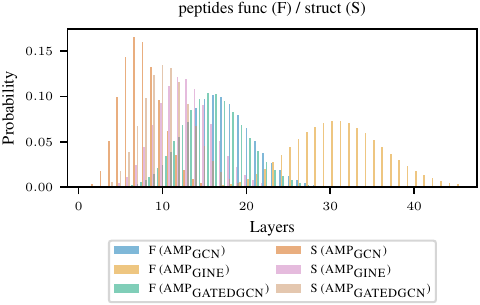}}
    \end{subfigure}
    \caption{We show the distribution learned by the best configurations of each base model on the synthetic and chemical datasets.}    
    \label{fig:depth-analysis}
\end{figure*}

We now comment on \method{}'s ability to mitigate oversmoothing and oversquashing, and we refer to Figure \ref{fig:oversmoothing-oversquashing-analysis} for a qualitative analysis of the former (left) and of the latter (right). First, we computed the logarithm of the Dirichlet energy over embeddings of a trained GCN for different layers and datasets. 
This analysis reveals that the Dirichlet energy for \method{}'s variants is typically higher than the corresponding baselines and it can exhibit a stable, decreasing, or increasing behavior as the depth grows, in contrast to existing theoretical research on the GCN model where the Dirichlet energy constantly decreases and embeddings converge to the same value \citep{li_deeper_2018,rusch_survey_2023}; note that we apply skip connections to the base GCN, so the energy does not immediately decreases. Therefore, it appears that our approach is indeed capable of \textit{controlling} oversmoothing; we attribute this to the combination of message filtering and a layer-wise loss, which favors the propagation of gradient to intermediate layers. We also report the layer-wise logarithm of node embeddings' sensitivity\footnote{Averaged over 250 validation nodes due to prohibitive costs.} as the gradient of the embeddings of the last layer $L$ with respect to the ones of intermediate layers~$l$: $\sum_{(v,u) \in \mathcal{V}}\left|\left| \frac{\partial \bm{h}_v^L}{\partial \bm{h}^{\ell}_u} \right|\right|_{1}$. Sensitivity provides insights into how pruning messages affects oversquashing; in fact, filtering messages might reduce said sensitivity with respect to the input. We report a quite heterogeneous picture in Figure \ref{fig:oversmoothing-oversquashing-analysis} (right): the sensitivity of \method{} can peak at the first or last layers, increase abruptly, or remain relatively stable. In all these cases, we have seen how \method{}$_\textsc{GCN}$ substantially improves performances on tasks where it seems important to address oversquashing. This evidence also warns us against using sensitivity as the sole metric to measure oversquashing as a performance bottleneck.

\subsection{Analysis of the \method{}'s Learned Depth}
\label{subsec:results-layering}
\begin{figure*}[!t]
    \centering
    \begin{subfigure}
        {\includegraphics{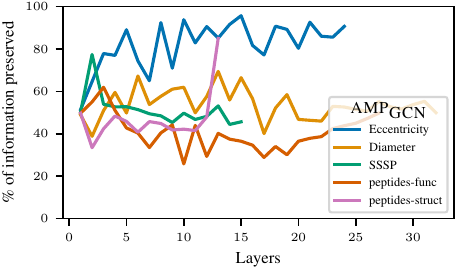}}
    \end{subfigure}
    \begin{subfigure}
        {\includegraphics{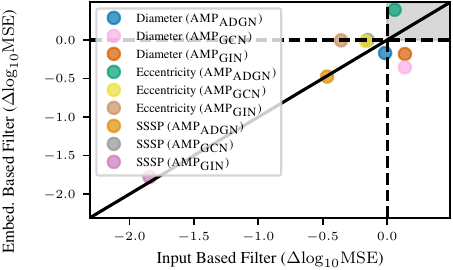}}
    \end{subfigure}
    \begin{subfigure}
        {\includegraphics{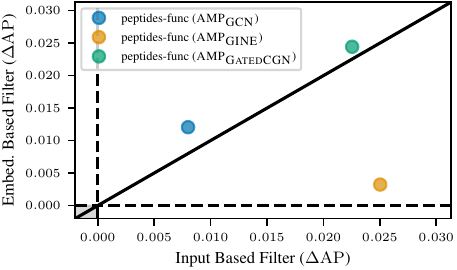}}
    \end{subfigure}
    \begin{subfigure}
        {\includegraphics{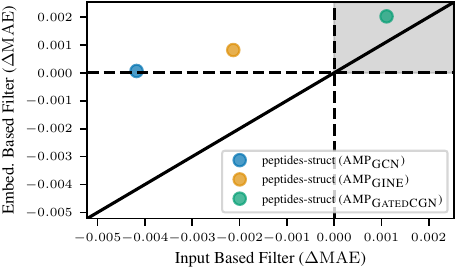}}
    \end{subfigure}
    \caption{(Top right) we visualize the amount of information preserved in each layer by \method{}$_{\textsc{GCN}}$. (Others) Ablation study of message filtering scheme: If a point lies in the area represented by the gray color, then filtering is not beneficial.}
    \label{fig:ablation-filtering}
\end{figure*}
To understand how \method{} mitigates underreaching, we inspect the learned distributions for synthetic datasets in Figure~\ref{fig:depth-analysis}~(left). \method{} uses more layers than the baselines (details in Appendix \ref{sec:base-models-depth}) to achieve the best score on Eccentricity. Instead, it is found that about 20 layers are necessary to solve the task for the Diameter dataset, with all runs  attaining a mean value between 17 and 22 layers. Finally, SSSP seems to be the task that requires fewer layers on average, with \method{}$_{\textsc{GIN}}$ selecting less than ten layers to reach a very competitive score. Overall, it appears that folded normal and mixtures of folded normal distributions were selected more frequently as the best hyper-parameter for the synthetic tasks; in particular, the distributions for Eccentricity look sharply peaked, as if the models would need to use only the information computed at the very end of the deep architecture. It is worth remarking that this behavior is completely adaptive and guided by the task.
Finally, we observe that the distributions learned on the chemical datasets are mostly Poisson ones, and \method{} learns deeper networks than the corresponding baselines to reach better scores. These distributions peak at around ten layers for \textit{peptides-struct}, which is more or less in line with what was reported in previous works \citep{tonshoff_did_2023}. In any case, \method{} enables training of very deep architectures thanks to its layer-wise output. However, this has a non-negligible cost in terms of number of readout parameters.

\subsection{On the Effects of Message Filtering}
\label{subsec:results-filtering}
Figure \ref{fig:ablation-filtering} visualizes the amount of information pruned at each layer by \method{}$_{\textsc{GCN}}$ on all datasets, together with an ablation study about the benefits of message filtering. The amount of information pruned is computed by summing the message filters' activations and normalizing the result by the total number of messages exchanged at each layer. We can see how \method{}$_{\textsc{GCN}}$ gradually increases the amount of information to be used for Eccentricity, whereas in \textit{peptides-func}, this quantity is almost always below 50\%. One can appreciate how, depending on the task, the behavior of the message filtering changes significantly.

The ablation study, on the other hand, provides evidence that message filtering is, in most cases, a good strategy for performance improvements. We see that, for each model and dataset, input-based and embedding-based filtering provide a positive improvement in scores compared to no-filtering. Points lying in the grey area correspond to no improvement. We conclude that the choice of which filtering strategy to use remains a matter of empirical investigations.
\section{Conclusions}
\label{sec:conclusions}
We have introduced Adaptive Message Passing, a probabilistic framework that endows message passing architectures with the ability to learn how many messages to exchange between nodes and which messages to filter out. Our approach actively targets the long-range issue by relying on the observation that filtering messages mitigates oversmoothing and oversquashing, whereas learning depth can ideally solve underreaching. 
\method{} achieves competitive results on long-range datasets without imposing strong inductive biases.
Through qualitative analyses, our findings reveal how \method{} learns very deep architectures if necessary for the task, and the amount of information propagated can greatly be reduced compared to classical message passing.
Overall, our approach suggests that it might not be necessary to alter the initial graph structure, e.g., through rewiring, to improve the performances on long-range tasks.
We hope Adaptive message passing will foster exciting research opportunities in the graph machine learning field and find successful applications in physics, chemistry, and material sciences.

\section*{Impact Statement}
This paper presents work whose goal is to advance the field of Machine Learning. There are many potential societal consequences of our work, none of which we feel must be specifically highlighted here.


\bibliography{bibliography}

\begin{thebibliography}{96}
\providecommand{\natexlab}[1]{#1}
\providecommand{\url}[1]{\texttt{#1}}
\expandafter\ifx\csname urlstyle\endcsname\relax
  \providecommand{\doi}[1]{doi: #1}\else
  \providecommand{\doi}{doi: \begingroup \urlstyle{rm}\Url}\fi

\bibitem[Abboud et~al.(2022)Abboud, Dimitrov, and Ceylan]{abboud_shortest_2022}
Abboud, R., Dimitrov, R., and Ceylan, I.~I.
\newblock Shortest path networks for graph property prediction.
\newblock In \emph{The 1st Learning on Graphs Conference (LoG)}, 2022.

\bibitem[Abu-El-Haija et~al.(2019)Abu-El-Haija, Perozzi, Kapoor, Alipourfard, Lerman, Harutyunyan, Ver~Steeg, and Galstyan]{haija_mixhop_2019}
Abu-El-Haija, S., Perozzi, B., Kapoor, A., Alipourfard, N., Lerman, K., Harutyunyan, H., Ver~Steeg, G., and Galstyan, A.
\newblock Mixhop: Higher-order graph convolutional architectures via sparsified neighborhood mixing.
\newblock In \emph{Proceedings of the 36th International Conference on Machine Learning (ICML)}, 2019.

\bibitem[Alon \& Yahav(2021)Alon and Yahav]{alon_on_2021}
Alon, U. and Yahav, E.
\newblock On the bottleneck of graph neural networks and its practical implications.
\newblock In \emph{9th International Conference on Learning Representations (ICLR)}, 2021.

\bibitem[Bacciu et~al.(2018)Bacciu, Errica, and Micheli]{bacciu_contextual_2018}
Bacciu, D., Errica, F., and Micheli, A.
\newblock Contextual {Graph} {Markov} {Model}: {A} deep and generative approach to graph processing.
\newblock In \emph{Proceedings of the 35th {International} {Conference} on {Machine} {Learning} ({ICML})}, volume~80, pp.\  294--303, 2018.

\bibitem[Bacciu et~al.(2020{\natexlab{a}})Bacciu, Errica, and Micheli]{bacciu_probabilistic_2020}
Bacciu, D., Errica, F., and Micheli, A.
\newblock Probabilistic learning on graphs via contextual architectures.
\newblock \emph{Journal of Machine Learning Research}, 21\penalty0 (134):\penalty0 1--39, 2020{\natexlab{a}}.

\bibitem[Bacciu et~al.(2020{\natexlab{b}})Bacciu, Errica, Micheli, and Podda]{bacciu_gentle_2020}
Bacciu, D., Errica, F., Micheli, A., and Podda, M.
\newblock A gentle introduction to deep learning for graphs.
\newblock \emph{Neural Networks}, 129:\penalty0 203--221, 9 2020{\natexlab{b}}.

\bibitem[Banerjee et~al.(2022)Banerjee, Karhadkar, Wang, Alon, and Mont{\'u}far]{banerjee_oversquashing_2022}
Banerjee, P.~K., Karhadkar, K., Wang, Y.~G., Alon, U., and Mont{\'u}far, G.
\newblock Oversquashing in gnns through the lens of information contraction and graph expansion.
\newblock In \emph{Proceedings ot the 58th Annual Allerton Conference on Communication, Control, and Computing (Allerton)}, 2022.

\bibitem[Barbero et~al.(2024)Barbero, Velingker, Saberi, Bronstein, and Giovanni]{barbero_locality_2023}
Barbero, F., Velingker, A., Saberi, A., Bronstein, M.~M., and Giovanni, F.~D.
\newblock Locality-aware graph rewiring in {GNN}s.
\newblock In \emph{Proceedings of the 12th International Conference on Learning Representations (ICLR)}, 2024.

\bibitem[Bernardo \& Smith(2009)Bernardo and Smith]{bernardo_bayesian_2009}
Bernardo, J.~M. and Smith, A.~F.
\newblock \emph{Bayesian theory}, volume 405.
\newblock John Wiley \& Sons, 2009.

\bibitem[Blei et~al.(2017)Blei, Kucukelbir, and McAuliffe]{blei_variational_2017}
Blei, D.~M., Kucukelbir, A., and McAuliffe, J.~D.
\newblock Variational inference: A review for statisticians.
\newblock \emph{Journal of the American statistical Association}, 112\penalty0 (518):\penalty0 859--877, 2017.

\bibitem[Bober et~al.(2023)Bober, Monod, Saucan, and Webster]{bober_rewring_2022}
Bober, J., Monod, A., Saucan, E., and Webster, K.~N.
\newblock Rewiring networks for graph neural network training using discrete geometry.
\newblock In \emph{International Conference On Complex Networks And Their Applications}, 2023.

\bibitem[Bongini et~al.(2018)Bongini, Rigutini, and Trentin]{bongini_recursive_2018}
Bongini, M., Rigutini, L., and Trentin, E.
\newblock Recursive neural networks for density estimation over generalized random graphs.
\newblock \emph{IEEE Transactions on Neural Networks and Learning Systems}, 29\penalty0 (11):\penalty0 5441--5458, 2018.

\bibitem[Bresson \& Laurent(2017)Bresson and Laurent]{bresson_residual_2017}
Bresson, X. and Laurent, T.
\newblock Residual gated graph convnets.
\newblock \emph{arXiv preprint arXiv:1711.07553}, 2017.

\bibitem[Brockschmidt(2020)]{brockschmidt_gnn_2020}
Brockschmidt, M.
\newblock Gnn-film: Graph neural networks with feature-wise linear modulation.
\newblock In \emph{Proceedings of the 37th International Conference on Machine Learning (ICML)}, 2020.

\bibitem[Campa et~al.(2014)Campa, Dauxois, Fanelli, and Ruffo]{campa_physics_2014}
Campa, A., Dauxois, T., Fanelli, D., and Ruffo, S.
\newblock \emph{Physics of long-range interacting systems}.
\newblock OUP Oxford, 2014.

\bibitem[Castellana \& Errica(2023)Castellana and Errica]{castellana_investigating_2023}
Castellana, D. and Errica, F.
\newblock Investigating the interplay between features and structures in graph learning.
\newblock In \emph{MLG Workshop at ECML PKDD}, 2023.

\bibitem[Castellana et~al.(2022)Castellana, Errica, Bacciu, and Micheli]{castellana_infinite_2022}
Castellana, D., Errica, F., Bacciu, D., and Micheli, A.
\newblock The infinite contextual graph {M}arkov model.
\newblock In \emph{Proceedings of the 39th International Conference on Machine Learning (ICML)}, pp.\  2721--2737, 2022.

\bibitem[Chamberlain et~al.(2021)Chamberlain, Rowbottom, Gorinova, Bronstein, Webb, and Rossi]{chamberlain2021grand}
Chamberlain, B., Rowbottom, J., Gorinova, M.~I., Bronstein, M., Webb, S., and Rossi, E.
\newblock Grand: Graph neural diffusion.
\newblock In \emph{Proceedings of the 38th {International} {Conference} on {Machine} {Learning} ({ICML})}, 2021.

\bibitem[Chen et~al.(2022)Chen, O’Bray, and Borgwardt]{chen_structure_2022}
Chen, D., O’Bray, L., and Borgwardt, K.
\newblock Structure-aware transformer for graph representation learning.
\newblock In \emph{Proceedings of the 39th International Conference on Machine Learning (ICML)}. PMLR, 2022.

\bibitem[Chen et~al.(2020{\natexlab{a}})Chen, Wei, Huang, Ding, and Li]{chen2020simple}
Chen, M., Wei, Z., Huang, Z., Ding, B., and Li, Y.
\newblock Simple and deep graph convolutional networks.
\newblock In \emph{Proceedings of the 37th {International} {Conference} on {Machine} {Learning} ({ICML})}, 2020{\natexlab{a}}.

\bibitem[Chen et~al.(2020{\natexlab{b}})Chen, Wu, and Zaki]{chen_iterative_2020}
Chen, Y., Wu, L., and Zaki, M.
\newblock Iterative deep graph learning for graph neural networks: Better and robust node embeddings.
\newblock In \emph{Proceedings of the 34th Conference on Neural Information Processing Systems (NeurIPS)}, 2020{\natexlab{b}}.

\bibitem[Corso et~al.(2020)Corso, Cavalleri, Beaini, Li{\`o}, and Veli{\v{c}}kovi{\'c}]{corso_principal_2020}
Corso, G., Cavalleri, L., Beaini, D., Li{\`o}, P., and Veli{\v{c}}kovi{\'c}, P.
\newblock Principal neighbourhood aggregation for graph nets.
\newblock In \emph{Proceedings of the 34th Conference on Neural Information Processing Systems (NeurIPS)}, 2020.

\bibitem[Deac et~al.(2022)Deac, Lackenby, and Veli{\v{c}}kovi{\'c}]{deac_expander_2022}
Deac, A., Lackenby, M., and Veli{\v{c}}kovi{\'c}, P.
\newblock Expander graph propagation.
\newblock In \emph{The 1st Learning on Graphs Conference (LoG)}, 2022.

\bibitem[del Castillo \& P{\'{e}}rez-Casany(2005)del Castillo and P{\'{e}}rez-Casany]{castillo_overdispersed_2005}
del Castillo, J. and P{\'{e}}rez-Casany, M.
\newblock Overdispersed and underdispersed poisson generalizations.
\newblock \emph{Journal of Statistical Planning and Inference}, 134:\penalty0 486--500, 2005.

\bibitem[Di~Giovanni et~al.(2023)Di~Giovanni, Giusti, Barbero, Luise, Lio, and Bronstein]{di_over_2023}
Di~Giovanni, F., Giusti, L., Barbero, F., Luise, G., Lio, P., and Bronstein, M.~M.
\newblock On over-squashing in message passing neural networks: The impact of width, depth, and topology.
\newblock In \emph{Proceedings of the 40th International Conference on Machine Learning (ICML)}, 2023.

\bibitem[Ding et~al.(2024)Ding, Orvieto, He, and Hofmann]{ding_recurrent_2023}
Ding, Y., Orvieto, A., He, B., and Hofmann, T.
\newblock Recurrent distance filtering for graph representation learning.
\newblock In \emph{Proceedings of the 41st International Conference on Machine Learning (ICML)}, 2024.

\bibitem[Dwivedi et~al.(2022)Dwivedi, Ramp{\'a}{\v{s}}ek, Galkin, Parviz, Wolf, Luu, and Beaini]{dwivedi_long_2022}
Dwivedi, V.~P., Ramp{\'a}{\v{s}}ek, L., Galkin, M., Parviz, A., Wolf, G., Luu, A.~T., and Beaini, D.
\newblock Long range graph benchmark.
\newblock In \emph{Proceedings of the 36th Conference on Neural Information Processing Systems (NeurIPS)}, 2022.

\bibitem[Errica(2023)]{errica_class_2023}
Errica, F.
\newblock On class distributions induced by nearest neighbor graphs for node classification of tabular data.
\newblock In \emph{Proceedings of the 37th Conference on Neural Information Processing Systems (NeurIPS)}, 2023.

\bibitem[Faber \& Wattenhofer(2023)Faber and Wattenhofer]{faber_asynchronous_2023}
Faber, L. and Wattenhofer, R.
\newblock Gw{AC}: {GNN}s with asynchronous communication.
\newblock In \emph{The 2nd Learning on Graphs Conference (LoG)}, 2023.

\bibitem[Fahlman \& Lebiere(1989)Fahlman and Lebiere]{fahlman_cascade_1989}
Fahlman, S. and Lebiere, C.
\newblock The cascade-correlation learning architecture.
\newblock In \emph{Proceedings of the 3rd Conference on Neural Information Processing Systems (NIPS)}, 1989.

\bibitem[Fatemi et~al.(2023)Fatemi, Abu-El-Haija, Tsitsulin, Kazemi, Zelle, Bulut, Halcrow, and Perozzi]{fatemi_ugsl_2023}
Fatemi, B., Abu-El-Haija, S., Tsitsulin, A., Kazemi, M., Zelle, D., Bulut, N., Halcrow, J., and Perozzi, B.
\newblock Ugsl: A unified framework for benchmarking graph structure learning.
\newblock In \emph{Topology, Algebra, and Geometry in Machine Learning Workshop, ICML}, 2023.

\bibitem[Ferber et~al.(2012)Ferber, Zoete, and Michielin]{ferber_tcell_2012}
Ferber, M., Zoete, V., and Michielin, O.
\newblock T-cell receptors binding orientation over peptide/mhc class i is driven by long-range interactions.
\newblock \emph{PloS one}, 7\penalty0 (12):\penalty0 e51943, 2012.

\bibitem[Finkelshtein et~al.(2024)Finkelshtein, Huang, Bronstein, and Ceylan]{finkelshtein_cooperative_2023}
Finkelshtein, B., Huang, X., Bronstein, M.~M., and Ceylan, I.~I.
\newblock Cooperative graph neural networks.
\newblock In \emph{Proceedings of the 41st International Conference on Machine Learning (ICML)}, 2024.

\bibitem[Franceschi et~al.(2019)Franceschi, Niepert, Pontil, and He]{franceschi_learning_2019}
Franceschi, L., Niepert, M., Pontil, M., and He, X.
\newblock Learning discrete structures for graph neural networks.
\newblock In \emph{Proceedings of the 36th International Conference on Machine Learning (ICML)}, 2019.

\bibitem[Frasca et~al.(2020)Frasca, Rossi, Eynard, Chamberlain, Bronstein, and Monti]{frasca_sign_2020}
Frasca, F., Rossi, E., Eynard, D., Chamberlain, B., Bronstein, M., and Monti, F.
\newblock Sign: Scalable inception graph neural networks.
\newblock In \emph{Graph Representation Learning and Beyond (GRL+) Workshop, ICML}, 2020.

\bibitem[Gabrielsson et~al.(2023)Gabrielsson, Yurochkin, and Solomon]{bruel_rewiring_2023}
Gabrielsson, R.~B., Yurochkin, M., and Solomon, J.
\newblock Rewiring with positional encodings for graph neural networks.
\newblock \emph{Transactions on Machine Learning Research}, 2023.
\newblock ISSN 2835-8856.

\bibitem[Gasteiger et~al.(2019)Gasteiger, Wei{\ss}enberger, and G{\"u}nnemann]{gasteiger_diffusion_2019}
Gasteiger, J., Wei{\ss}enberger, S., and G{\"u}nnemann, S.
\newblock Diffusion improves graph learning.
\newblock In \emph{Proceedings of the 33rd Conference on Neural Information Processing Systems (NeurIPS)}, 2019.

\bibitem[Goel \& Degroot(1981)Goel and Degroot]{goel_information_1981}
Goel, P.~K. and Degroot, M.~H.
\newblock Information about hyperparamters in hierarchical models.
\newblock \emph{Journal of the American Statistical Association}, 76\penalty0 (373):\penalty0 140--147, 1981.
\newblock ISSN 01621459.

\bibitem[Gravina et~al.(2023)Gravina, Bacciu, and Gallicchio]{gravina_antisymmetric_2023}
Gravina, A., Bacciu, D., and Gallicchio, C.
\newblock Anti-symmetric {DGN}: a stable architecture for deep graph networks.
\newblock In \emph{11h International Conference on Learning Representations (ICLR)}, 2023.

\bibitem[Gruber et~al.(2024)Gruber, Sch{\"a}fl, Brandstetter, and Hochreiter]{gruber2024processing}
Gruber, L., Sch{\"a}fl, B., Brandstetter, J., and Hochreiter, S.
\newblock Processing large-scale graphs with g-signatures.
\newblock In \emph{ICML 2024 AI for Science Workshop}, 2024.

\bibitem[Gutteridge et~al.(2023)Gutteridge, Dong, Bronstein, and Di~Giovanni]{gutteridge_drew_2023}
Gutteridge, B., Dong, X., Bronstein, M.~M., and Di~Giovanni, F.
\newblock Drew: Dynamically rewired message passing with delay.
\newblock In \emph{Proceedings of the 40th International Conference on Machine Learning (ICML)}, 2023.

\bibitem[Hamilton et~al.(2017)Hamilton, Ying, and Leskovec]{hamilton_inductive_2017}
Hamilton, W., Ying, Z., and Leskovec, J.
\newblock Inductive representation learning on large graphs.
\newblock In \emph{Proceedings of the 31st {Conference} on {Neural} {Information} {Processing} {Systems} ({NIPS})}, pp.\  1024--1034, 2017.

\bibitem[Hammer et~al.(2005)Hammer, Micheli, and Sperduti]{hammer_universal_2005}
Hammer, B., Micheli, A., and Sperduti, A.
\newblock Universal approximation capability of cascade correlation for structures.
\newblock \emph{Neural Computation}, 17\penalty0 (5):\penalty0 1109--1159, 2005.

\bibitem[Harris et~al.(2012)Harris, Yang, and Hardin]{harris_modeling_2012}
Harris, T., Yang, Z., and Hardin, J.~W.
\newblock Modeling underdispersed count data with generalized poisson regression.
\newblock \emph{The Stata Journal}, 12:\penalty0 736--747, 2012.

\bibitem[Hasanzadeh et~al.(2020)Hasanzadeh, Hajiramezanali, Boluki, Zhou, Duffield, Narayanan, and Qian]{hasanzadeh_bayesian_2020}
Hasanzadeh, A., Hajiramezanali, E., Boluki, S., Zhou, M., Duffield, N., Narayanan, K., and Qian, X.
\newblock Bayesian graph neural networks with adaptive connection sampling.
\newblock In \emph{Proceedings of the 37th International Conference on Machine Learning (ICML)}, 2020.

\bibitem[He et~al.(2023)He, Hooi, Laurent, Perold, LeCun, and Bresson]{he_generalization_2023}
He, X., Hooi, B., Laurent, T., Perold, A., LeCun, Y., and Bresson, X.
\newblock A generalization of vit/mlp-mixer to graphs.
\newblock In \emph{Proceedings of the 40th International Conference on Machine Learning (ICML)}, 2023.

\bibitem[Heilig et~al.(2025)Heilig, Gravina, Trenta, Gallicchio, and Bacciu]{heilig2025porthamiltonian}
Heilig, S., Gravina, A., Trenta, A., Gallicchio, C., and Bacciu, D.
\newblock Port-hamiltonian architectural bias for long-range propagation in deep graph networks.
\newblock In \emph{Proceedings of the 13th International Conference on Learning Representations (ICLR)}, 2025.

\bibitem[Hu et~al.(2020)Hu, Liu, Gomes, Zitnik, Liang, Pande, and Leskovec]{hu_strategies_2020}
Hu, W., Liu, B., Gomes, J., Zitnik, M., Liang, P., Pande, V., and Leskovec, J.
\newblock Strategies for pre-training graph neural networks.
\newblock In \emph{8th International Conference on Learning Representations (ICLR)}, 2020.

\bibitem[Jordan et~al.(1999)Jordan, Ghahramani, Jaakkola, and Saul]{jordan_introduction_1999}
Jordan, M.~I., Ghahramani, Z., Jaakkola, T.~S., and Saul, L.~K.
\newblock An introduction to variational methods for graphical models.
\newblock \emph{Machine Learning}, 37:\penalty0 183--233, 1999.

\bibitem[Karhadkar et~al.(2023)Karhadkar, Banerjee, and Montufar]{karhadkar_fosr_2023}
Karhadkar, K., Banerjee, P.~K., and Montufar, G.
\newblock Fo{SR}: First-order spectral rewiring for addressing oversquashing in {GNN}s.
\newblock In \emph{The 11th International Conference on Learning Representations (ICLR)}, 2023.

\bibitem[Kazi et~al.(2022)Kazi, Cosmo, Ahmadi, Navab, and Bronstein]{kazi_differentiable_2022}
Kazi, A., Cosmo, L., Ahmadi, S.-A., Navab, N., and Bronstein, M.~M.
\newblock Differentiable graph module (dgm) for graph convolutional networks.
\newblock \emph{IEEE Transactions on Pattern Analysis and Machine Intelligence}, 45:\penalty0 1606--1617, 2022.

\bibitem[Kipf \& Welling(2017)Kipf and Welling]{kipf_semisupervised_2017}
Kipf, T.~N. and Welling, M.
\newblock Semi-supervised classification with graph convolutional networks.
\newblock In \emph{5th {International} {Conference} on {Learning} {Representations} ({ICLR})}, 2017.

\bibitem[Kurihara et~al.(2007)Kurihara, Welling, and Teh]{kurihara_collapsed_2007}
Kurihara, K., Welling, M., and Teh, Y.~W.
\newblock Collapsed variational dirichlet process mixture models.
\newblock In \emph{Proceedings of the 20th International Joint Conference on Artificial Intelligence (IJCAI)}, 2007.

\bibitem[Leone et~al.(1961)Leone, Nelson, and Nottingham]{leone_folded_1961}
Leone, F.~C., Nelson, L.~S., and Nottingham, R.~B.
\newblock The folded normal distribution.
\newblock \emph{Technometrics}, 3:\penalty0 543--550, 1961.

\bibitem[Li et~al.(2019)Li, Muller, Thabet, and Ghanem]{li_deepgcns_2019}
Li, G., Muller, M., Thabet, A., and Ghanem, B.
\newblock {DeepGCNs: Can GCNs go as deep as CNNs?}
\newblock In \emph{Proceedings of the IEEE/CVF International Conference on Computer Vision (ICCV)}, 2019.

\bibitem[Li et~al.(2018)Li, Han, and Wu]{li_deeper_2018}
Li, Q., Han, Z., and Wu, X.-M.
\newblock Deeper insights into graph convolutional networks for semi-supervised learning.
\newblock In \emph{Proceedings of the 32nd {AAAI} {Conference} on {Artificial} {Intelligence} ({AAAI})}, 2018.

\bibitem[Li et~al.(2024)Li, Wang, Huang, Yang, Wei, Zhang, Wang, Wang, Shao, and Liu]{li2024longshortrange}
Li, Y., Wang, Y., Huang, L., Yang, H., Wei, X., Zhang, J., Wang, T., Wang, Z., Shao, B., and Liu, T.-Y.
\newblock Long-short-range message-passing: A physics-informed framework to capture non-local interaction for scalable molecular dynamics simulation.
\newblock In \emph{Proceedings of the 12th International Conference on Learning Representations (ICLR)}, 2024.

\bibitem[Liu et~al.(2021)Liu, Kawaguchi, Hooi, Wang, and Xiao]{liu_eignn_2021}
Liu, J., Kawaguchi, K., Hooi, B., Wang, Y., and Xiao, X.
\newblock Eignn: Efficient infinite-depth graph neural networks.
\newblock In \emph{Proceedings of the 35th Conference on Neural Information Processing Systems (NeurIPS)}, 2021.

\bibitem[Ma et~al.(2023)Ma, Lin, Lim, Romero-Soriano, Dokania, Coates, Torr, and Lim]{ma_grit_2023}
Ma, L., Lin, C., Lim, D., Romero-Soriano, A., Dokania, P.~K., Coates, M., Torr, P., and Lim, S.-N.
\newblock Graph inductive biases in transformers without message passing.
\newblock In \emph{Proceedings of the 40th International Conference on Machine Learning (ICML)}, 2023.

\bibitem[Micheli(2009)]{micheli_neural_2009}
Micheli, A.
\newblock Neural network for graphs: {A} contextual constructive approach.
\newblock \emph{IEEE Transactions on Neural Networks}, 20\penalty0 (3):\penalty0 498--511, 2009.

\bibitem[M{\"u}ller et~al.(2024)M{\"u}ller, Galkin, Morris, and Ramp{\'a}{\v{s}}ek]{muller_attending_2023}
M{\"u}ller, L., Galkin, M., Morris, C., and Ramp{\'a}{\v{s}}ek, L.
\newblock Attending to graph transformers.
\newblock \emph{Transactions on Machine Learning Research}, 2024.

\bibitem[Nazaret \& Blei(2022)Nazaret and Blei]{nazaret_variational_2022}
Nazaret, A. and Blei, D.
\newblock Variational inference for infinitely deep neural networks.
\newblock In \emph{Proceedings of the 39th International Conference on Machine Learning (ICML)}, 2022.

\bibitem[Oono \& Suzuki(2020)Oono and Suzuki]{oono_graph_2020}
Oono, K. and Suzuki, T.
\newblock Graph neural networks exponentially lose expressive power for node classification.
\newblock In \emph{8th International Conference on Learning Representations (ICLR)}, 2020.

\bibitem[Park et~al.(2024)Park, Ryu, Kim, Woo, Yun, and Ahn]{park_non_2023}
Park, S., Ryu, N., Kim, G., Woo, D., Yun, S.-Y., and Ahn, S.
\newblock Non-backtracking graph neural networks.
\newblock \emph{Transactions on Machine Learning Research}, 2024.

\bibitem[Piana et~al.(2012)Piana, Lindorff-Larsen, Dirks, Salmon, Dror, and Shaw]{piana_evaluating_2012}
Piana, S., Lindorff-Larsen, K., Dirks, R.~M., Salmon, J.~K., Dror, R.~O., and Shaw, D.~E.
\newblock Evaluating the effects of cutoffs and treatment of long-range electrostatics in protein folding simulations.
\newblock \emph{PLoS One}, 7\penalty0 (6):\penalty0 e39918, 2012.

\bibitem[Poli et~al.(2021)Poli, Massaroli, Rabideau, Park, Yamashita, Asama, and Park]{poli_continuous_2021}
Poli, M., Massaroli, S., Rabideau, C.~M., Park, J., Yamashita, A., Asama, H., and Park, J.
\newblock Continuous-depth neural models for dynamic graph prediction.
\newblock \emph{arXiv preprint arXiv:2106.11581}, 2021.

\bibitem[Qian et~al.(2024{\natexlab{a}})Qian, Manolache, Ahmed, Zeng, Broeck, Niepert, and Morris]{qian_probabilistically_2023}
Qian, C., Manolache, A., Ahmed, K., Zeng, Z., Broeck, G. V.~d., Niepert, M., and Morris, C.
\newblock Probabilistically rewired message-passing neural networks.
\newblock In \emph{Proceedings of the 12th {International} {Conference} on {Learning} {Representations} ({ICLR})}, 2024{\natexlab{a}}.

\bibitem[Qian et~al.(2024{\natexlab{b}})Qian, Manolache, Morris, and Niepert]{qian_probabilistic_2024}
Qian, C., Manolache, A., Morris, C., and Niepert, M.
\newblock Probabilistic graph rewiring via virtual nodes.
\newblock In \emph{The 38th Annual Conference on Neural Information Processing Systems (NeurIPS)}, 2024{\natexlab{b}}.

\bibitem[Ramp{\'a}{\v{s}}ek et~al.(2022)Ramp{\'a}{\v{s}}ek, Galkin, Dwivedi, Luu, Wolf, and Beaini]{rampavsek_recipe_2022}
Ramp{\'a}{\v{s}}ek, L., Galkin, M., Dwivedi, V.~P., Luu, A.~T., Wolf, G., and Beaini, D.
\newblock Recipe for a general, powerful, scalable graph transformer.
\newblock In \emph{Proceedings of the 36th Conference on Neural Information Processing Systems (NeurIPS)}, 2022.

\bibitem[Rong et~al.(2020)Rong, Huang, Xu, and Huang]{rong_dropedge_2020}
Rong, Y., Huang, W., Xu, T., and Huang, J.
\newblock Dropedge: Towards deep graph convolutional networks on node classification.
\newblock In \emph{8th International Conference on Learning Representations (ICLR)}, 2020.

\bibitem[Roy(2003)]{roy_discrete_2003}
Roy, D.
\newblock The discrete normal distribution.
\newblock \emph{Communications in Statistics - Theory and Methods}, 32:\penalty0 1871--1883, 2003.
\newblock \doi{10.1081/sta-120023256}.

\bibitem[R{\"u}egsegger et~al.(2001)R{\"u}egsegger, Leber, and Walter]{ruegsegger_block_2001}
R{\"u}egsegger, U., Leber, J.~H., and Walter, P.
\newblock Block of hac1 mrna translation by long-range base pairing is released by cytoplasmic splicing upon induction of the unfolded protein response.
\newblock \emph{Cell}, 107\penalty0 (1):\penalty0 103--114, 2001.

\bibitem[Rumelhart et~al.(1986)Rumelhart, Hinton, and Williams]{rumelhart_learning_1986}
Rumelhart, D.~E., Hinton, G.~E., and Williams, R.~J.
\newblock Learning representations by back-propagating errors.
\newblock \emph{Nature}, 323:\penalty0 533--536, 1986.

\bibitem[Rusch et~al.(2022)Rusch, Chamberlain, Rowbottom, Mishra, and Bronstein]{rusch_graph_2022}
Rusch, T.~K., Chamberlain, B., Rowbottom, J., Mishra, S., and Bronstein, M.
\newblock Graph-coupled oscillator networks.
\newblock In \emph{Proceedings of the 39th International Conference on Machine Learning (ICML)}, 2022.

\bibitem[Rusch et~al.(2023)Rusch, Bronstein, and Mishra]{rusch_survey_2023}
Rusch, T.~K., Bronstein, M.~M., and Mishra, S.
\newblock A survey on oversmoothing in graph neural networks.
\newblock \emph{arXiv preprint arXiv:2303.10993}, 2023.

\bibitem[Sanchez-Gonzalez et~al.(2020)Sanchez-Gonzalez, Godwin, Pfaff, Ying, Leskovec, and Battaglia]{sanchez_learning_2020}
Sanchez-Gonzalez, A., Godwin, J., Pfaff, T., Ying, R., Leskovec, J., and Battaglia, P.
\newblock Learning to simulate complex physics with graph networks.
\newblock In \emph{Proceedings of the 37th International Conference on Machine Learning (ICML)}, 2020.

\bibitem[Scarselli et~al.(2009)Scarselli, Gori, Tsoi, Hagenbuchner, and Monfardini]{scarselli_graph_2009}
Scarselli, F., Gori, M., Tsoi, A.~C., Hagenbuchner, M., and Monfardini, G.
\newblock The graph neural network model.
\newblock \emph{IEEE Transactions on Neural Networks}, 20\penalty0 (1):\penalty0 61--80, 2009.

\bibitem[Shirts et~al.(2007)Shirts, Mobley, Chodera, and Pande]{shirts_accurate_2007}
Shirts, M.~R., Mobley, D.~L., Chodera, J.~D., and Pande, V.~S.
\newblock Accurate and efficient corrections for missing dispersion interactions in molecular simulations.
\newblock \emph{Journal of Physical Chemistry B}, 111:\penalty0 13052--13063, 2007.

\bibitem[Shirzad et~al.(2023)Shirzad, Velingker, Venkatachalam, Sutherland, and Sinop]{shirzad_exphormer_2023}
Shirzad, H., Velingker, A., Venkatachalam, B., Sutherland, D.~J., and Sinop, A.~K.
\newblock Exphormer: Sparse transformers for graphs.
\newblock In \emph{Proceedings of the 40th International Conference on Machine Learning (ICML)}, 2023.

\bibitem[Sperduti \& Starita(1997)Sperduti and Starita]{sperduti_supervised_1997}
Sperduti, A. and Starita, A.
\newblock Supervised neural networks for the classification of structures.
\newblock \emph{IEEE Transactions on Neural Networks}, 8\penalty0 (3):\penalty0 714--735, 1997.

\bibitem[Spinelli et~al.(2020)Spinelli, Scardapane, and Uncini]{spinelli_adaptive_2020}
Spinelli, I., Scardapane, S., and Uncini, A.
\newblock Adaptive propagation graph convolutional network.
\newblock \emph{IEEE Transactions on Neural Networks and Learning Systems}, 32\penalty0 (10):\penalty0 4755--4760, 2020.

\bibitem[T{\"o}nshoff et~al.(2023{\natexlab{a}})T{\"o}nshoff, Ritzert, Rosenbluth, and Grohe]{tonshoff_did_2023}
T{\"o}nshoff, J., Ritzert, M., Rosenbluth, E., and Grohe, M.
\newblock Where did the gap go? reassessing the long-range graph benchmark.
\newblock In \emph{The 2nd Learning on Graphs Conference (LoG)}, 2023{\natexlab{a}}.

\bibitem[T{\"o}nshoff et~al.(2023{\natexlab{b}})T{\"o}nshoff, Ritzert, Wolf, and Grohe]{toenshoff_graph_2021}
T{\"o}nshoff, J., Ritzert, M., Wolf, H., and Grohe, M.
\newblock Walking out of the weisfeiler leman hierarchy: Graph learning beyond message passing.
\newblock \emph{Transactions on Machine Learning Research}, 2023{\natexlab{b}}.

\bibitem[Topping et~al.(2022)Topping, Giovanni, Chamberlain, Dong, and Bronstein]{topping_understanding_2022}
Topping, J., Giovanni, F.~D., Chamberlain, B.~P., Dong, X., and Bronstein, M.~M.
\newblock Understanding over-squashing and bottlenecks on graphs via curvature.
\newblock In \emph{10th International Conference on Learning Representations (ICLR)}, 2022.

\bibitem[Tortorella \& Micheli(2022)Tortorella and Micheli]{tortorella_leave_2022}
Tortorella, D. and Micheli, A.
\newblock Leave graphs alone: Addressing over-squashing without rewiring.
\newblock In \emph{The 1st Learning on Graphs Conference (LoG)}, 2022.

\bibitem[Toth et~al.(2022)Toth, Lee, Hacker, and Oberhauser]{toth2022capturing}
Toth, C., Lee, D., Hacker, C., and Oberhauser, H.
\newblock Capturing graphs with hypo-elliptic diffusions.
\newblock In \emph{The 36th Annual Conference on Neural Information Processing Systems (NeurIPS)}, 2022.

\bibitem[Velickovic et~al.(2018)Velickovic, Cucurull, Casanova, Romero, Lio, and Bengio]{velickovic_graph_2018}
Velickovic, P., Cucurull, G., Casanova, A., Romero, A., Lio, P., and Bengio, Y.
\newblock Graph attention networks.
\newblock In \emph{6th {International} {Conference} on {Learning} {Representations} ({ICLR})}, 2018.

\bibitem[Wang et~al.(2024)Wang, Tsepa, Ma, and Wang]{wang2024graph}
Wang, C., Tsepa, O., Ma, J., and Wang, B.
\newblock Graph-mamba: Towards long-range graph sequence modeling with selective state spaces.
\newblock \emph{arXiv preprint arXiv:2402.00789}, 2024.

\bibitem[Wang et~al.(2021)Wang, Wang, Yang, and Lin]{wang2021dissecting}
Wang, Y., Wang, Y., Yang, J., and Lin, Z.
\newblock Dissecting the diffusion process in linear graph convolutional networks.
\newblock In \emph{The 35th Annual Conference on Neural Information Processing Systems (NeurIPS)}, 2021.

\bibitem[Xu et~al.(2018)Xu, Li, Tian, Sonobe, Kawarabayashi, and Jegelka]{xu_representation_2018}
Xu, K., Li, C., Tian, Y., Sonobe, T., Kawarabayashi, K.-I., and Jegelka, S.
\newblock Representation learning on graphs with jumping knowledge networks.
\newblock \emph{Proceedings of the 35th International Conference on Machine Learning (ICML)}, pp.\  5453--5462, 2018.

\bibitem[Xu et~al.(2019)Xu, Hu, Leskovec, and Jegelka]{xu_how_2019}
Xu, K., Hu, W., Leskovec, J., and Jegelka, S.
\newblock How powerful are graph neural networks?
\newblock In \emph{7th {International} {Conference} on {Learning} {Representations} ({ICLR})}, 2019.

\bibitem[Yu et~al.(2020)Yu, Zhang, Jiang, Wu, and Yang]{yu_graph_2020}
Yu, D., Zhang, R., Jiang, Z., Wu, Y., and Yang, Y.
\newblock Graph-revised convolutional network.
\newblock In \emph{Proceedings of the Machine Learning and Knowledge Discovery in Databases: European Conference (ECML PKDD)}, 2020.

\bibitem[Zhao et~al.(2021)Zhao, Liu, Neves, Woodford, Jiang, and Shah]{zhao_data_2021}
Zhao, T., Liu, Y., Neves, L., Woodford, O., Jiang, M., and Shah, N.
\newblock Data augmentation for graph neural networks.
\newblock In \emph{Proceedings of the 35th AAAI Conference on Artificial Intelligence (AAAI)}, 2021.

\bibitem[Zhou et~al.(2022)Zhou, Huang, Song, Chen, and Hu]{zhou_auto_2022}
Zhou, K., Huang, X., Song, Q., Chen, R., and Hu, X.
\newblock Auto-gnn: Neural architecture search of graph neural networks.
\newblock \emph{Frontiers in Big Data}, 5:\penalty0 1029307, 2022.

\bibitem[Zhou et~al.(2023)Zhou, Zhou, Mao, Zhou, Chen, Tan, Zha, Feng, Chen, and Wang]{zhou_opengsl_2023}
Zhou, Z., Zhou, S., Mao, B., Zhou, X., Chen, J., Tan, Q., Zha, D., Feng, Y., Chen, C., and Wang, C.
\newblock Opengsl: A comprehensive benchmark for graph structure learning.
\newblock In \emph{Proceedings of the 37th Conference on Neural Information Processing Systems (NeurIPS)}, 2023.

\bibitem[Zhu et~al.(2020)Zhu, Yan, Zhao, Heimann, Akoglu, and Koutra]{zhu_beyond_2020}
Zhu, J., Yan, Y., Zhao, L., Heimann, M., Akoglu, L., and Koutra, D.
\newblock Beyond homophily in graph neural networks: Current limitations and effective designs.
\newblock In \emph{Proceedings of the 34th Conference on Neural Information Processing Systems (NeurIPS)}, 2020.

\end{thebibliography}
\bibliographystyle{icml2025}

\newpage
\appendix
\onecolumn
\section{Extended Related Work Section}
\label{sec:extended-related work}
Below we provide a more detailed related work section that does not fit in the main paper.

\paragraph{Oversquashing.}
There are many methods that attempt to address the oversquashing problem with the goal of better capturing long-range dependencies \citep{alon_on_2021,li2024longshortrange}. There is agreement that modifying the message passing scheme leads to improved performances; in this sense, the graph structure does not match exactly the computational graph used to compute the node embeddings. Some works learn how a node should completely stop propagating a message in a fixed-depth architecture \citep{spinelli_adaptive_2020} or if it should only listen, isolate, or receive/broadcast its own message \citep{finkelshtein_cooperative_2023}. Similarly, one can learn to sample edges at each message passing layer according to some learned parametrization \citep{hasanzadeh_bayesian_2020} or have a completely asynchronous message passing \citep{faber_asynchronous_2023}. Our work differs from these works as we apply a \textit{learned} (soft) filtering to all existing messages.

Another idea is to modify message passing to avoid backtracking of messages back to the source node, to achieve less redundancy of information \citep{park_non_2023}. While this choice proves effective at several tasks, it is still an open question whether it is always the best choice for the task at hand. In attention-based approaches \citep{velickovic_graph_2018}, an edge filter is computed using some non-linear relationship between the embeddings of the source and destination nodes. This can introduce a severe computational burden as the function needs to be applied to all edges.
Similarly, GNN-FiLM \citep{brockschmidt_gnn_2020} learns a feature-wise linear modulator that depends on the destination node and modulates the magnitude of all incoming messages.

On the other hand, rewiring approaches try to alter the graph connectivity rather than the message passing operation. This action is meant to increase the sensitivity \citep{topping_understanding_2022} of a node with respect to another, and it has been theoretically linked to the oversquashing problem. Some recent works try to preserve locality and sparsity of the rewiring process \citep{barbero_locality_2023} or dynamically rewire the graph based on the layers \citep{gutteridge_drew_2023}. In contrast, others take a probabilistic approach to rewiring based on sampled sub-graphs \citep{qian_probabilistically_2023}. Recently, a critical perspective on the effectiveness of rewiring approaches has also been given \citep{tortorella_leave_2022}. \\
Finally, we mention ordinary differential equation-based message passing approaches, which are provably preserving information regardless of the depth in the network \citep{gravina_antisymmetric_2023} and have shown great results on datasets aimed at capturing long-range dependencies.

\paragraph{Oversmoothing.}
Oversmoothing is perhaps one of the first problems that emerged empirically and was then analyzed theoretically \citep{li_deeper_2018,oono_graph_2020,rusch_survey_2023}. Not surprisingly, one practical solution to oversmoothing is dropping edges to reduce the overall flow of messages and, thus, avoid the convergence of all embeddings to the same value \citep{rong_dropedge_2020}. Another well-known solution to alleviate oversmoothing is to employ skip/residual connections \citep{kipf_semisupervised_2017,li_deepgcns_2019}, which consists of summing the representations learned at deeper layers with those of previous ones. Similarly to what is done in this work, the concatenation of node representations across layers is also a way to circumvent oversmoothing, which has been adopted in neural and probabilistic models to improve the downstream performances on several node and graph-related tasks \citep{bacciu_contextual_2018,xu_representation_2018,bacciu_probabilistic_2020}.
Instead, an orthogonal research direction considers implicit neural networks for graphs that correspond to infinite-depth models and seem to be able to capture long-range dependencies \citep{poli_continuous_2021,liu_eignn_2021}. These models simulate synchronous message passing with a potentially infinite number of message-propagation steps, and some of them appear to be empirically robust to the oversmoothing problem.

\paragraph{Adaptive Architectures.}
The last part of this section is dedicated to works that try to learn the architecture of the model during training. Our work is inspired by the unbounded depth networks (UDNs) of \citet{nazaret_variational_2022}, who proposed a variational framework for learning the depth in deep neural networks. In the graph domain, the first approach to learning the depth of a DGN was proposed by \citet{micheli_neural_2009}, who applied the cascade correlation algorithm \citep{fahlman_cascade_1989} to learn a proper depth for the task. In the field of graph representation learning, other works attempted to learn the width of the representation of each message passing layer by exploiting Bayesian non-parametric models \citep{castellana_infinite_2022}, which allows to save time and memory when building deeper probabilistic DGNs.
Finally, it is important to notice that these works, including this manuscript, are all orthogonal to the popular field of neural architecture search \citep{zhou_auto_2022}: The former attempts at dynamically modifying the architecture \textit{during learning}, whereas neural architecture search approaches find smarter ways to carry out a grid search. An advantage of adaptive approaches is that they can greatly reduce time and computational costs to perform a hyper-parameter search.

\paragraph{Graph Rewiring, Structure Learning, and Transformer Models for Graphs.}

There has been a growing interest in adapting the input graph structure and, therefore, the message-passing operations of DGNs. These approaches can be roughly divided into graph structure learning and rewiring methods on one hand and transformer-based models on the other. 
There are several strategies of graph rewiring such as incorporating multi-hop neighbors \citep{bruel_rewiring_2023} and nodes reachable through shortest paths \citep{abboud_shortest_2022}. \citet{gutteridge_drew_2023} is most related to the filtering approach we introduce as it rewires the graph for every message-passing layer.  The methods MixHop \citep{haija_mixhop_2019}, SIGN \citep{frasca_sign_2020}, and DIGL \citep{gasteiger_diffusion_2019} can also be considered as graph rewiring as these leverage different heuristics to reach further-away neighbors in a single message-passing layer. \citet{deac_expander_2022,shirzad_exphormer_2023} use the notion of  expander graphs to alter the messages, while \citet{karhadkar_fosr_2023} resort to spectral techniques, and \citet{banerjee_oversquashing_2022} propose a random edge flip approach. There also exist rewiring heuristics based on particular metrics such as Ricci and Forman curvature~\citep{bober_rewring_2022}.  Finally, recent work has proposed an approach to rewire the input graph probabilistically and in a data-driven way \citep{qian_probabilistically_2023}. 

Graph structure learning (GSL) approaches are another line of work rewiring the input graphs. Here the focus is on node classification problems on a single graph. Typically, these methods maintain a learnable function that assigns prior scores to the edges, and based on these scores a subset of edges is selected from the original graph~\citep{chen_iterative_2020,yu_graph_2020,zhao_data_2021}. To introduce discreteness and sparsity, \citet{franceschi_learning_2019,kazi_differentiable_2022,zhao_data_2021} use discrete categorical (sampled from using the Gumbel softmax trick) and Bernoulli  distributions, respectively. Most existing GSL approaches typically use a $k$-NN algorithm, a simple randomized version of $k$-NN, or represent edges with independent Bernoulli random variables. For a comprehensive survey of GSL, see~\citet{fatemi_ugsl_2023,zhou_opengsl_2023}. In the context of node classification, there has been recent progress in a more principled understanding of the possible advantages of GSL in the fully supervised setting~\citep{castellana_investigating_2023, errica_class_2023}.  

Graph transformers~\citep{dwivedi_long_2022,he_generalization_2023,muller_attending_2023,rampavsek_recipe_2022, chen_structure_2022} adaptively change the message-passing operations for each layer by applying an attention mechanism among all nodes of the input graphs. Experimental results have shown that graph transformers have the ability to mitigate over-squashing~\citep{muller_attending_2023}. Due to their attention mechanism, however, transformer-based models have typically a quadratic space and memory requirement. 

State space models have also been applied to graphs \citep{wang2024graph}. Similarly, theoretically grounded sequence-processing frameworks \citep{toth2022capturing,gruber2024processing}, leveraging randomized signatures, demonstrated promising potential in alleviating oversquashing effects in large graphs. The main differences with AMP, \citet{wang2024graph} relies on a separate sequence model to develop a node selection mechanism whereas we work on the message passing itself; \citet{toth2022capturing} develops a new graph Laplacian that is better suited for long-range propagation; \citet{gruber2024processing} converts a graph into a latent representation that can be passed to downstream classifiers.

Contrary to all these approaches, AMP, the adaptive message passing approach presented here, uses a variational framework to jointly learn both the depth of the DGN and a filtering of the messages passed between nodes in each of these layers.  

\section{Generalization to New Families of Truncated Distributions}
\label{sec:extension-truncated}
The family of \quotes{truncated} Poisson distributions, proposed by \citet{nazaret_variational_2022} to learn unbounded depth networks, satisfies specific requirements that allow us to efficiently perform (variational) inference. In particular, by truncating the Poisson distribution at its quantile function evaluated at $c$, one can bound its support and compute expectations in finite time. However, the Poisson distribution suffers from equidispersion, meaning that the variance is equal to the mean; this is a particularly limiting scenario when learning distributions over the importance of layers. In fact, one might also want to model variances that are smaller or greater than the mean, which is referred to as under and over-dispersion, respectively, to learn a broader class of distributions \citep{castillo_overdispersed_2005,harris_modeling_2012}. To address this problem, in the following we introduce two families of distributions and prove that they also satisfy the requirements defined in \citet{nazaret_variational_2022}; we formally recall such requirements below.

\begin{definition}
    A variational family $Q = {q(\bm{\omega})}$ over $\mathbb{N}^+$ is \textit{unbounded} with \textit{connected} and \textit{bounded} members if
    \begin{enumerate}
        \item $\forall q \in Q$, $\operatorname{support}(q)$ is bounded
        \item $\forall L \in \mathbb{N}^+, \exists q \in Q$ such that $L \in \operatorname{argmax}(q)$
        \item Each parameter in the set $\bm{\omega}$ is a continuous variable.        
    \end{enumerate}
    \label{def:truncated-family}
\end{definition}

Condition 1 is necessary to compute the expectation over $q(\ell ; \bm{\lambda})$ in finite time, condition 2 ensures that we can give enough probability mass to each point in the support of $q$, and condition 3 is required for learning the distributions' parameters in a differentiable manner.

\paragraph{The Discrete Folded Normal Distribution.}
folded normal (FN) distributions \citep{leone_folded_1961} can model under-, equi-, and over-dispersion. They are parametrized by a mean parameter $\mu$ and a standard deviation $\sigma$. Its density is defined as
\begin{align}
  p_{\text{FN}}(x;\mu,\sigma) = \frac{1}{\sqrt{2\pi\sigma^2}} e^{-\frac{(x - \mu)^2}{2\sigma^2}} + \frac{1}{\sqrt{2\pi\sigma^2}} e^{-\frac{(x + \mu)^2}{2\sigma^2}}, \ \mu,\sigma \in \mathbb{R}, x \ge 0.
\end{align}
To use a FN distribution in \method{}, the idea is to first define a discrete version of the folded normal (DFN) with the strategy highlighted in \citet{roy_discrete_2003}:
\begin{align}
    & p_{\text{DFN}}(0;\mu,\sigma) = S_{\text{FN}}(1; \mu,\sigma) \\ 
    & p_{\text{DFN}}(x;\mu,\sigma) = S_{\text{FN}}(x+1; \mu,\sigma) - S_{\text{FN}}(x; \mu,\sigma), \forall x \in \mathbb{N}^+ 
\end{align}
where $S_{\text{FN}}(x; \mu,\sigma)$ is the cumulative distribution function (c.d.f.) of the folded normal distribution evaluated at $x$.\footnote{We note that the support is defined over $\mathbb{N}$ and not on $\mathbb{N}^+$, but this is not an issue from a practical point of view.} It is also useful to notice the equivalence between the c.d.f of the DFN $S_{\text{DFN}}(x;\mu,\sigma)$ and that of a folded normal $S_{\text{FN}}(x;\mu,\sigma)$
\begin{align}
 & S_{\text{DFN}}(x;\mu,\sigma) = \sum_{i=0}^x p_{\text{DFN}}(x;\mu,\sigma) =  S_{\text{FN}}(x+1; \mu,\sigma), x \in \mathbb{N},
 \label{eq:fn-dfn-equivalence}
\end{align}
which implies that $S_{\text{DFN}}(x;\mu,\sigma) \ge S_{\text{FN}}(x;\mu,\sigma)$. Figure \ref{fig:truncated-bounds} shows the probability mass function (p.m.f.) of a DFN distribution with $\mu=1$ and $\sigma=5$ and its cumulative mass function (c.m.f.).

\begin{figure}[t]
    \centering
    \begin{subfigure}
        {\includegraphics{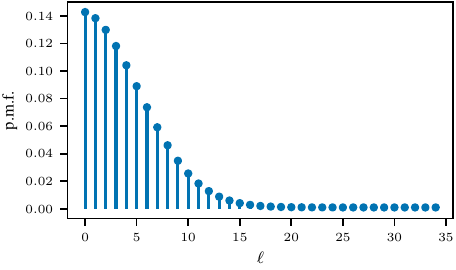}}
    \end{subfigure}
    \begin{subfigure}
        {\includegraphics{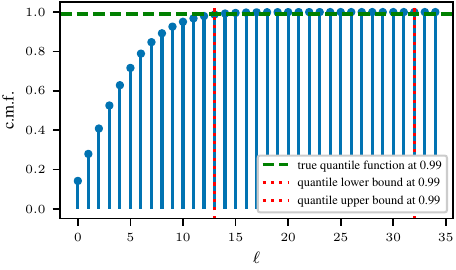}}
    \end{subfigure}
    \caption{(Left) Probability mass function of a DFN distribution with $\mu=1$ and $\sigma=5$. (Right) The cumulative mass function of the distribution with an example of lower and upper bounds (vertical dashed lines) for the true quantile function evaluated at $0.99$ of the corresponding FN distribution (horizontal dashed line).}
    \label{fig:truncated-bounds}
\end{figure}

Clearly, condition 3 of Definition \ref{def:truncated-family} is satisfied. It is also trivial to satisfy condition 2 by choosing a peaked distribution with a small value of $\sigma$. In what follows, we focus on lower and upper bounds to the quantile function evaluated at $c$ of the DFN distribution so that we know we can truncate the distribution to the \textit{finite} quantile threshold, meaning condition 1 is also satisfied.

\begin{theorem}
    There exists lower and upper bounds to the quantile evaluated at $c$, $0 < c < 1$, for any DFN distribution with $\sigma > 0$.
\end{theorem}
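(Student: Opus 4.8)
The plan is to reduce the statement about the discrete folded normal (DFN) to one about the continuous folded normal (FN), and then to sandwich the FN quantile between two closed-form quantities built from the probit function $\Phi^{-1}$ (with $\Phi$ the standard normal c.d.f.). Since the FN density is invariant under $\mu \mapsto -\mu$, I may assume $\mu \ge 0$. Writing a folded normal as $|Z|$ for $Z \sim \mathcal{N}(\mu,\sigma^2)$ gives, for $x \ge 0$,
\begin{align}
S_{\text{FN}}(x;\mu,\sigma) = \Phi\!\left(\frac{x-\mu}{\sigma}\right) + \Phi\!\left(\frac{x+\mu}{\sigma}\right) - 1, \nonumber
\end{align}
which is continuous and strictly increasing (the density is strictly positive since $\sigma>0$), with $S_{\text{FN}}(0;\mu,\sigma)=0$ and $S_{\text{FN}}(x;\mu,\sigma)\to 1$. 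Hence for every $c\in(0,1)$ there is a unique $q_{\text{FN}}(c)\in(0,\infty)$ with $S_{\text{FN}}(q_{\text{FN}}(c);\mu,\sigma)=c$, and $S_{\text{FN}}(y;\mu,\sigma)\ge c \iff y\ge q_{\text{FN}}(c)$.

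First I would invoke the identity $S_{\text{DFN}}(x;\mu,\sigma)=S_{\text{FN}}(x+1;\mu,\sigma)$ from Equation~\ref{eq:fn-dfn-equivalence}: the DFN $c$-quantile $q_{\text{DFN}}(c)=\min\{x\in\mathbb{N}:S_{\text{DFN}}(x;\mu,\sigma)\ge c\}$ equals $\min\{x\in\mathbb{N}:x+1\ge q_{\text{FN}}(c)\}$, which satisfies $q_{\text{FN}}(c)-1\le q_{\text{DFN}}(c)\le q_{\text{FN}}(c)$ (checking the two cases $q_{\text{FN}}(c)\le 1$ and $q_{\text{FN}}(c)>1$, using $z\le\lceil z\rceil<z+1$). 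So it suffices to bound $q_{\text{FN}}(c)$ from both sides by explicit finite functions of $\mu,\sigma,c$.

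Next I would establish a two-sided bound on the FN c.d.f.: since $\mu\ge 0$ and $\Phi$ is increasing with values in $(0,1)$, for all $x\ge 0$
\begin{align}
2\,\Phi\!\left(\frac{x-\mu}{\sigma}\right)-1 \;\le\; S_{\text{FN}}(x;\mu,\sigma) \;\le\; \Phi\!\left(\frac{x-\mu}{\sigma}\right). \nonumber
\end{align}
Evaluating the right inequality at $x_{\ell}:=\mu+\sigma\,\Phi^{-1}(c)$ gives $S_{\text{FN}}(x_{\ell};\mu,\sigma)\le c$, and evaluating the left inequality at $x_{u}:=\mu+\sigma\,\Phi^{-1}\!\bigl(\tfrac{1+c}{2}\bigr)$ gives $S_{\text{FN}}(x_{u};\mu,\sigma)\ge c$; by strict monotonicity, $x_{\ell}\le q_{\text{FN}}(c)\le x_{u}$, and $x_{u}>0$ because $\tfrac{1+c}{2}>\tfrac12$. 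Combining with the previous step yields $\max\!\bigl(0,\,x_{\ell}-1\bigr)\le q_{\text{DFN}}(c)\le x_{u}$, i.e.\ the DFN admits finite lower and upper bounds on its $c$-quantile — in particular a finite $c$-quantile — which verifies Condition~1 of Definition~\ref{def:truncated-family}.

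I do not expect a genuine obstacle; the care is all bookkeeping: tracking the $x\mapsto x+1$ shift when passing between $S_{\text{DFN}}$ and $S_{\text{FN}}$ and the resulting rounding to an integer quantile, and noting that the crude c.d.f.\ bounds remain valid (though possibly vacuous) for all signs of their arguments — e.g.\ when $c<\tfrac12$ and $\mu$ is small, $x_{\ell}$ can be negative, in which case the lower bound is simply $0$ and $q_{\text{DFN}}(c)\ge 0$ trivially. If sharper bounds were wanted one could retain the exact term $\Phi\!\left(\frac{x+\mu}{\sigma}\right)$ on one side, but the crude version already suffices for the truncation needed by the framework.
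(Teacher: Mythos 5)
Your proposal is correct, and for the lower bound it is essentially the paper's argument in different notation: you bound $S_{\text{FN}}(x)\le \Phi\bigl(\tfrac{x-\mu}{\sigma}\bigr)$, which is exactly the paper's observation that the Gaussian c.d.f.\ dominates the FN c.d.f., so the Gaussian $c$-quantile $\mu+\sigma\Phi^{-1}(c)$ lower-bounds the FN quantile, and the shift by $1$ then transfers this to the DFN. For the upper bound, however, you take a genuinely different and more elementary route: the paper applies Chernoff's bound with the folded-normal moment generating function at $t=1/\sigma$, obtaining the bound $\mu+\sigma\ln k-\sigma\ln(1-c)$ with an explicit constant $k>0$, whereas you use the pointwise inequality $S_{\text{FN}}(x)\ge 2\Phi\bigl(\tfrac{x-\mu}{\sigma}\bigr)-1$ (valid for $\mu\ge 0$, which you may assume by the $\mu\mapsto-\mu$ symmetry), giving the closed form $\mu+\sigma\Phi^{-1}\bigl(\tfrac{1+c}{2}\bigr)$ — effectively comparing the folded normal to a $\mu$-shifted half-normal. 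Your version avoids the MGF computation entirely and yields both bounds in terms of the probit function, and you are also more careful than the paper about the discrete shift (the $x\mapsto x+1$ bookkeeping and the $\max(0,\cdot)$ truncation when the lower bound goes negative); the paper's Chernoff bound buys an expression that needs only $\exp$, $\log$, and two evaluations of $\Phi$ rather than an inverse c.d.f., but for the purpose of the theorem — finiteness of the bounds so that a binary search can locate the true quantile — the two are interchangeable.
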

\begin{proof}
    We first need to compute a lower bound to the quantile function of the FN distribution since there is no closed formula for it. To start, we note that the c.d.f. of the Gaussian distribution is greater or equal to that of a folded normal distribution:
    \begin{align}
       \frac{1}{2} \text{erf}\left(\frac{x-\mu}{\sigma\sqrt{2}}\right) + \frac{1}{2} \ge \frac{1}{2} \text{erf}\left(\frac{x-\mu}{\sigma\sqrt{2}}\right) + \underbrace{\frac{1}{2} \text{erf}\left(\frac{x+\mu}{\sigma\sqrt{2}}\right)}_{\le \frac{1}{2}} = S_{\text{FN}}(x; \mu,\sigma).
    \end{align}
    where \quotes{erf} is the error function. This implies that the quantile threshold of the Gaussian $x_G$, which we know how to compute, is reached earlier than that of the FN $x_{\text{FN}}$, that is, $x_G \le x_{\text{FN}}$, and in particular $\left\lfloor x_G \right\rfloor \le \left\lfloor x_{\text{FN}} \right\rfloor$ are also lower bounds. It then follows from Equation \ref{eq:fn-dfn-equivalence} that $\left\lfloor x_G \right\rfloor$ - 1 is a lower bound for the DFN distribution.

To find an upper bound, we apply Chernoff's Bound
\begin{align}
    p(X \ge x) \le \frac{M_X(t)}{e^{tx}}, \ \  \forall t > 0
\end{align}
where $X$ is a \rv{} that follows a folded normal distribution with mean $\mu$ and standard deviation $\sigma$, and $M_X(t)=\mathbb{E}[e^{tX}]$ is the well-known moment generating function of the $X$. To find an upper bound to the quantile threshold, we need that $(1-c) = \frac{M_X(t)}{e^{tx}}$ for some choice of $t$. Defining $\Phi$ as the normal \textit{c.d.f.} $\Phi(x)= \frac{1}{2}\left[1 + \text{erf}\left(\frac{x}{\sqrt{2}}\right)\right] \ge 0$, we choose $t=\frac{1}{\sigma}$ and obtain
\begin{align}
    \frac{M_X(t)}{e^{tx}} & = e^{-tx}\left(e^{\frac{\sigma^2t^2}{2}+\mu t}\Phi\left(\frac{\mu}{\sigma} + 1 \right) + e^{\frac{\sigma^2t^2}{2} - \mu t}\Phi\left(-\frac{\mu}{\sigma} + 1 \right)\right) \\
    & = e^{-\frac{x}{\sigma}}e^{\frac{1}{2}}e^{\frac{\mu}{\sigma}}\left(\Phi\left(\frac{\mu}{\sigma} + 1 \right) + \Phi\left(-\frac{\mu}{\sigma} + 1 \right) \frac{1}{e^{2\mu / \sigma}}\right) \\
    & = k e^{\frac{\mu-x}{\sigma}}, \\ & k = e^{\frac{1}{2}}\left(\Phi\left(\frac{\mu}{\sigma} + 1 \right) + \Phi\left(-\frac{\mu}{\sigma} + 1 \right) \frac{1}{e^{2\mu / \sigma}}\right) > 0.
\end{align}
Therefore, the upper bound of the quantile threshold is given by
\begin{align}
    & k e^{\frac{\mu-x}{\sigma}} = (1-c) \\
    & \ln k + \frac{\mu-x}{\sigma} = \ln (1-c) \\
    & \sigma \ln k + \mu-x = \sigma \ln (1-c) \\
    & x =  \mu + \sigma \ln k - \sigma \ln (1-c).
\end{align}

Therefore, if the upper bound to the quantile of the FN is $x$, it follows from Equation \ref{eq:fn-dfn-equivalence} that $x-1$ is also an upper bound of the DFN.
\end{proof}
Consequently, we can efficiently find the true quantile threshold by running a binary search between the lower and the upper bounds. Figure \ref{fig:truncated-bounds} (right) shows an example of lower and upper bounds (vertical dashed lines) as well as the true quantile threshold of the FN distribution.

\paragraph{A Mixture of Simpler Distributions.}
It is possible to learn more complex distributions $q(\ell ; \bm{\omega})$ that satisfy the conditions of Definition \ref{def:truncated-family} by mixing simpler distributions like the DFN defined above. A mixture of $C$ families of unbounded distributions $q_1(\ell ; \bm{\omega}), \dots, q_C(\ell ; \bm{\omega})$ with bounded and connected members is defined as:
\begin{align}
    q_{\mathcal{M}}(\ell ; \bm{\omega}) = \sum_{i=0}^C w_i q_i(\ell ; \bm{\omega})
\end{align}
where $0 \le w_i \le 1$ is mixture's $i$ weight and $\sum_{i=0}^C w_i = 1$. Conditions 2 and 3 are again trivially satisfied (a mixture can always collapse to one of its distributions that satisfy said conditions), and below, we show that lower and upper bounds still exist.

\begin{theorem}
    There exist lower and upper bounds to the quantile evaluated at $c$, $0 < c < 1$, for a mixture of $C$ distributions that satisfy the conditions of Definition \ref{def:truncated-family}, provided that lower and upper bounds exist for each distribution in the mixture.
\end{theorem}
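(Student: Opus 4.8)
The plan is to reduce everything to the cumulative mass function of the mixture, which is itself the same convex combination of the individual c.m.f.s: writing $S_i$ for the c.m.f.\ of $q_i$ and $S_{\mathcal{M}}$ for that of $q_{\mathcal{M}}$, summation over $\ell \le x$ commutes with the finite sum over $i$, so $S_{\mathcal{M}}(x) = \sum_{i=0}^{C} w_i S_i(x)$ for every $x \in \mathbb{N}$. The $c$-quantile of $q_{\mathcal{M}}$ is the least $x$ with $S_{\mathcal{M}}(x) \ge c$, and I need to exhibit finite integers bracketing it, built from the bounds assumed to exist for each $q_i$.

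For the upper bound I would let $u_i$ be the guaranteed upper bound to the $c$-quantile of $q_i$ (so $S_i(u_i) \ge c$) and set $U := \max_{0 \le i \le C} u_i$, which is finite. Since every $S_i$ is non-decreasing, $S_i(U) \ge S_i(u_i) \ge c$ for all $i$, hence $S_{\mathcal{M}}(U) = \sum_i w_i S_i(U) \ge \big(\sum_i w_i\big) c = c$, so the mixture's $c$-quantile is at most $U$.

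For the lower bound I would let $l_i$ be the guaranteed lower bound to the $c$-quantile of $q_i$ and set $L := \min_{0 \le i \le C} l_i$, also finite. The one observation with any content is that a convex combination that reaches $c$ must have a summand that reaches $c$: if $\sum_i w_i S_i(x) \ge c$ with $w_i \ge 0$ and $\sum_i w_i = 1$, then $S_j(x) \ge c$ for some $j$, since otherwise every contributing term is strictly below $c$ and so is the average. For that $j$, the value $x$ is at least the $c$-quantile of $q_j$, hence $x \ge l_j \ge L$; applying this with $x$ equal to the mixture's $c$-quantile shows the latter is at least $L$. Thus $[L,U]$ brackets the quantile, a binary search between these endpoints recovers it exactly as in the DFN case, and condition~1 of Definition~\ref{def:truncated-family} is met (conditions~2 and~3 having already been noted).

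I do not expect a genuinely hard step here; the argument is essentially bookkeeping around the convexity of $S_{\mathcal{M}}$. The only things to watch are the directions of the inequalities in the two halves — monotonicity of the c.m.f.s drives the upper bound, while the lower bound rests on the elementary ``some summand must be large'' fact about convex combinations — and the off-by-one shifts between continuous and discrete supports familiar from the DFN argument, which affect the constants but not the existence of finite bounds.
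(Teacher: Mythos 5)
Your argument is correct and follows essentially the same route as the paper: decompose the mixture's c.m.f.\ as $S_{\mathcal{M}}(x) = \sum_i w_i S_i(x)$, take the maximum of the component upper bounds (monotonicity plus convexity gives $S_{\mathcal{M}} \ge c$ there), and take the minimum of the component lower bounds for the other direction. In fact you spell out the lower-bound half (the ``some summand must reach $c$'' pigeonhole for a convex combination), which the paper only asserts can be done ``using a similar approach,'' so your write-up is, if anything, slightly more complete.
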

\begin{proof}
The c.m.f. of a mixture of discrete distributions can be written as a weighted sum of c.m.f.s:
\begin{align}
    S_{\mathcal{M}}(x; \bm{\omega}) = \sum_{i=0}^C w_i S_i(x ; \bm{\omega}).
\end{align}
Let $x^*$ be the greatest upper bound of the quantile threshold across all $C$ components of the mixture, and let $i^*$ be the associated component. It follows that, $\forall j, S_j(x^* ; \bm{\omega}) \ge c$, and
\begin{align}
    S_{\mathcal{M}}(x^*; \bm{\omega}) = \sum_{i=0}^C w_i S_i(x^* ; \bm{\omega}) \ge \sum_{i=0}^C w_i c = c.
\end{align}
Therefore, $x^*$ is also an upper bound for the mixture of distributions. It is possible to prove that a lower bound of the mixture is the smallest lower bound of the quantile threshold across all $C$ components using a similar approach.
\end{proof}

To summarize, we have shown how one can use more complex families of distributions in the context of \method{}, allowing us to model under and over-dispersion. In this work, we will treat the choice of the family of distributions $q(\ell ; \bm{\omega})$ to use as a hyper-parameter to be tuned.

\section{ELBO Derivation}
\label{sec:elbo-derivation}

We now report the full derivation of our ELBO for a single sample:
\begin{align}
\ln p(g_i, Y_{g_i}) &\ge \mathbb{E}_{q(\bm{\theta}, L, \bm{F}_i, | g_i, Y_{g_i})}\left[\ln p(Y_{g_i}, L, \bm{F}_i, \bm{\theta} | g_i) - \ln q(L, \bm{F}_i, \bm{\theta} | g_i)\right] \\
&= \mathbb{E}_{q(L ; \bm{\lambda})}\Bigg[\ln\frac{p(L)}{q(L ; \bm{\lambda})} + \mathbb{E}_{q(\bm{\theta} | L ; \bm{\nu})}\left[\ln\frac{p(\bm{\theta})}{q(\bm{\theta} | L ; \bm{\nu})}\right] \\
& \hspace{5mm} + \mathbb{E}_{q(\bm{\theta} | L ; \bm{\nu})q(\bm{F}_i | g_i, L, \bm{\theta})}\Bigg[\ln\frac{p(\bm{F}_i)}{q(\bm{F}_i | g_i, L, \bm{\theta})} + \ln p(Y_{g_i} | L, \bm{F}_i, \bm{\theta} | g_i)\Bigg]\Bigg]\\
&= \sum_{\ell=1}^{\hat{L}}q(\ell ; \bm{\lambda})\Biggl[\ln\frac{p(\ell)}{q(\ell ; \bm{\lambda})} + \ln\frac{p(\bm{\nu})}{q(\bm{\nu} | \ell ; \bm{\nu})} + \ln\frac{p(\bm{F}_i)}{q(\bm{F}_i | g_i, \ell, \bm{\nu})} + \ln p(Y_{g_i} | \ell, \bm{F}_i, \bm{\nu}, g_i)\Biggr],
\end{align}

The extension to the full dataset is identical except for the decomposition of the rightmost term into a product of conditional probabilities, one for each \textit{i.i.d.}\ sample.

\section{On Oversmoothing, Oversquashing, and Underreaching}
\label{sec:discussion-oversquashing-underreaching-oversmoothing}
In this section, we discuss \method{}'s implications on \textit{oversmoothing}, \textit{oversquashing}, and \textit{underreaching}, all of which hamper the ability of DGNs to capture long-range interactions between nodes in the graph and are related in subtle ways.

\paragraph{Oversmoothing.} Oversmoothing has been formally defined by \citet{rusch_survey_2023} as the 
convergence of a node embeddings' similarity as the number of message passing layers increases. In other words, it formalizes the widely accepted notion that node embeddings tend to become identical after many layers of message passing. Different oversmoothing metrics have been proposed, and in this work, we consider the Dirichlet energy \citep{rusch_graph_2022,rusch_survey_2023} at layer $\ell$ defined as
\begin{align}
E(\bm{H^{\ell}}) = \frac{1}{|\mathcal{V}|} \sum_{u \in \mathcal{V}}\sum_{v \in \mathcal{N}_u} || \bm{h}_u^{\ell} - \bm{h}_v^{\ell} ||^2
\end{align}
where we indicate with $\bm{H^{\ell}}$ the set of node embeddings computed at layer $\ell$. 

There are at least two reasons why \method{} alleviates oversmoothing. The first is that, in principle, the adaptive message filtering scheme reduces the synchronous exchange of \textit{all} messages at a given layer, and message exchange will be different depending on the specific layer. The second is that the readout mapping of each layer directly propagates the gradient of the loss into the corresponding message passing operation, which encourages diversity of node representations of each layer $\ell$ as long as $q(\ell ; \bm{\lambda})$ is large enough (that is, layer $\ell$'s output is important for the final prediction). In our experiments, we will show that \method{} can generate architectures in which the Dirichlet energy does not decay exponentially and thus suffers less from oversmoothing than the baselines.

\begin{figure}[t]
    \centering
    \resizebox{0.5\columnwidth}{!}{\input{images/oversquashing}}
    \caption{Comparison of the 2-hop computational tree necessary to compute $\bm{h}^2_3$ in the graph of Figure \ref{fig:overview} for standard message passing (left) and \method{} (right), where we discretized message filtering to simplify the concept. \method{} can effectively prune/filter information in sub-trees to propagate only the relevant information for the task.}
    \label{fig:oversquashing}
\end{figure}

\paragraph{Oversquashing.} The term oversquashing refers to the compression of an exponentially-growing amount of information \citep{micheli_neural_2009} into fixed-size node embeddings \citep{alon_on_2021,di_over_2023}, causing a possibly severe bottleneck that hampers DGNs' ability to effectively propagate task-specific information. An intuitive visualization is provided in Figure \ref{fig:oversquashing} (left), where node $3$ of the graph defined in Figure \ref{fig:overview} needs to compress information of its 2-hop neighborhood into a single node embedding. The literature on the topic is already vast despite its very recent introduction; some works address topological bottlenecks through rewiring of the original graph structure \citep{topping_understanding_2022}, while others preserve information by viewing the message passing operations through the lens of ordinary differential equations \citep{gravina_antisymmetric_2023,heilig2025porthamiltonian}. By properly modifying the curvature of a graph \citep{topping_understanding_2022}, some graph rewiring approaches aim at increasing the \textit{sensitivity} of a node's $u$ embedding $\bm{h}_u^L$  with respect to the input $\bm{x}_v$ of another node $v$, that is $\left|\left| \frac{\partial \bm{h}_v^L}{\partial \bm{x}_u} \right|\right|_{1}$. \citet{topping_understanding_2022} argue that increasing the sensitivity can alleviate oversquashing and better capture long-range dependencies. Indeed, by rewiring two distant nodes with a new edge, the sensitivity of these two nodes will almost certainly increase. 

While we do agree that long-range dependencies can be better captured, we argue that rewiring might make the computational bottleneck problem \textit{worse} by adding extra information to be compressed into a node's embedding (assuming other edges are not removed). In contrast, the adaptive filtering scheme of \method{} shown in Figure \ref{fig:oversquashing} (right) might decrease the overall sensitivity defined above, but at the same time it will reduce the number of messages that need to be compressed into node $3$, hence \textit{alleviating} oversquashing. Similarly, the synthetic datasets defined in \citet{alon_on_2021}, which are meant to measure how well a model addresses oversquashing, require that \textit{all} information is preserved to solve a task. This would certainly be a good test-bed for ODE-based models \citep{gravina_antisymmetric_2023}, but other tasks might require propagating only a subset of the total information contained in the graph. The ability to \textit{isolate} such information from the rest can be seen as a solution to the oversquashing problem, which is exactly the opposite goal of the synthetic tasks previously mentioned. In summary, the problem of ``oversquashing'' is clearly multi-faceted and requires great care regarding its evaluation. As such, it might be a good idea for the future to decompose over-squashing into simpler sub-problems, such as the ability to isolate the relevant information (which \method{} can do), the ability to propagate all information, and the ability to increase the sensitivity between far-away nodes.

We complement this discussion with a Theorem, inspired by \citet{di_over_2023}, that shows how \method \ can control the upper bound on the sensitivity by filtering messages.

\paragraph{Theorem \ref{th:upper-bound}}   
    For \method{} with $m$ layers, embedding dimension $d$, and $u, v \in \mathcal{V}$,    
    \[
    \left\| \frac{\partial h_v^{\ell}}{\partial h_u^{0}} \right\|_{L^1} 
    \leq d \left( \left( c_{\mathrm{up}} \left( c_{\mathrm{rs}} I + c_{\mathrm{mp}} \left( c_F k_h + k_F \right) A \right) \right)^{\ell} \right)_{vu}.
    \]
    Here, MPNN is in the following form 
    \[
    (\mathbb{R}^d \ni) \ h_v^{\ell} = \mathrm{up}\left( \mathrm{rs}(h_v^{\ell-1}) + \mathrm{mp}(\sum_u A_{vu} F(h_u^{\ell-1}) \odot h_u^{\ell-1}) \right)
    \]
    where $\mathrm{up}, \mathrm{rs},$ and $\mathrm{mp}$ are Lipschitz functions as in \citet{di_over_2023} with constants $c_{\mathrm{up}}, c_{\mathrm{rs}}, c_{\mathrm{mp}}$, $c_F$ is the upper bound of the entry-wise $L^ 1$ matrix norm of $\frac{\partial F}{\partial x}$ for the filtering function $F$, $k_h$ is the maximal absolute value among the entries of $h$, and similarly $k_F$ for the output of $F$. 

    \begin{proof}
    
    \textbf{Case: $\ell = 1$}
    
    The gradient of the $\alpha$-th element of hidden vector after the first layer with respect to the $\beta$-th element of an input is written as
    \[
    \frac{\partial h_v^{1, \alpha}}{\partial h_u^{0, \beta}} 
    = \sum_{p=1}^d \frac{\partial \text{up}^{1, \alpha}}{\partial x_p} \left(\sum_{r=1}^d \frac{\partial \text{rs}^{1, p}}{\partial x_r} \frac{\partial h_v^{0, r}}{\partial h_u^{0, \beta}} + \sum_{q=1}^d \frac{\partial \text{mp}^{1, p}}{\partial x_q} \sum_{z} A_{vz} \frac{\partial \left( F(h_z^{0}) \odot h_z^{0} \right)^q}{\partial h_u^{0, \beta}} \right).
    \]
    Hereinafter, $\alpha, \beta \in [d]$ and for a scalar-valued function $f$, $\frac{\partial f}{\partial x_s}$ represents the derivative of $f$ with respect to $s$-th element $x_s$ of its vector input. We define a part of the second term in the right-hand side of the above equation as
    \[
    M(z) := \frac{\partial \left( F_0(h_z^{0}) \odot h_z^{0} \right)^q}{\partial h_u^{0, \beta}},
    \]
    and then 
    \begin{align*}
    M(z) &= \frac{\partial F^q(h_z^{0})}{\partial h_u^{0, \beta}} h_z^{0, q} + F^q(h_z^{0}) \frac{\partial h_z^{0, q}}{\partial h_u^{0, \beta}} \\
    &= \sum_{r=1}^d \frac{\partial F^q(h_z^{0})}{\partial h_z^{0, r}} \frac{\partial h_z^{0, r}}{\partial h_u^{0, \beta}} h_z^{0, q} + F^q(h_z^{0}) \frac{\partial h_z^{0, q}}{\partial h_u^{0, \beta}}.
    \end{align*}
    Therefore,
    
    \begin{align}
    \left| \frac{\partial h_v^{1, \alpha}}{\partial h_u^{0, \beta}} \right| 
    &= \left| \frac{\partial \text{up}^{1, \alpha}}{\partial x_p} \left( \sum_{r=1}^d \frac{\partial \text{rs}^{1, p}}{\partial x_r} \frac{\partial h_v^{0, r}}{\partial h_u^{0, \beta}} + \sum_{q=1}^d \frac{\partial \text{mp}^{1, p}}{\partial x_q} \sum_{z} A_{vz} M(z) \right) \right| \nonumber \\
    &\leq \left| \frac{\partial \text{up}^{1, \alpha}}{\partial x_p}  \frac{\partial \text{rs}^{1, p}}{\partial x_\beta} \delta_{v u} \right| + \left| \frac{\partial \text{up}^{1, \alpha}}{\partial x_p} \sum_{q=1}^d \frac{\partial \text{mp}^{1, p}}{\partial x_q} \sum_{z} A_{vz} M(z) \right| \label{eq:firstbd}.
    \end{align}
    
    
    The right-hand side of (\ref{eq:firstbd}) is further expanded to
    \begin{align}
    &\sum_{p=1}^d \frac{\partial \text{up}^{1, \alpha}}{\partial x_p} \sum_{q=1}^d \frac{\partial \text{mp}^{1, p}}{\partial x_q} \sum_z A_{vz} \sum_{r=1}^d \frac{\partial F^q}{\partial x_r} \frac{\partial h_z^{0, r}}{\partial h_u^{0, \beta}} h_z^{0, q} \label{eq:leftfirstbd} \\ 
    &+ \sum_{p=1}^d \frac{\partial \text{up}^{1, \alpha}}{\partial x_p} \sum_{q=1}^d \frac{\partial \text{mp}^{1, p}}{\partial x_q} \sum_z A_{vz} F^q(h_z^{0})  \frac{\partial h_z^{0,q}}{\partial h_u^{0, \beta}}. \label{eq:rightfirstbd}
    \end{align}
    Since this is the very first layer of MPNN, both terms can be reduced to simpler forms. Indeed, (\ref{eq:leftfirstbd}) can be reduced to
    \[
    (\ref{eq:leftfirstbd}) = \sum_{p=1}^d \frac{\partial \text{up}^{1, \alpha}}{\partial x_p} \sum_{q=1}^d \frac{\partial \text{mp}^{1, p}}{\partial x_q} A_{vu} \frac{\partial F^q}{\partial x_\beta} h_u^{0, q}.
    \]
    Using (generalized) Hölder's inequality, inequality (3.1) in [1], we get
    \begin{align}
    |(\ref{eq:leftfirstbd})| &\leq A_{vu} \sum_{p=1}^d \left| \frac{\partial \text{up}^{1, \alpha}}{\partial x_p} \right| \sum_{q=1}^d \left| \frac{\partial \text{mp}^{1, p}}{\partial x_q} \right| \left| \frac{\partial F^q}{\partial x_\beta} \right| \left| h_u^{0, q} \right| \nonumber \\
    &\leq A_{vu} k_h\sum_{p=1}^d \left| \frac{\partial \text{up}^{1, \alpha}}{\partial x_p} \right| \sum_{q=1}^d \left| \frac{\partial \text{mp}^{1, p}}{\partial x_q} \right| \sum_{q=1}^d \left| \frac{\partial F^q}{\partial x_\beta} \right| \nonumber \\
    &\leq A_{vu} \cdot c_{\text{up}} \cdot c_{\text{mp}} \cdot c_F \cdot k_h .
    \end{align}
    Similarly, the norm of (\ref{eq:rightfirstbd}) is also bounded from above
    \[\left| (\ref{eq:rightfirstbd}) \right| \leq A_{vu} c_{up} c_{mp} k_{F}.\]
    The left-hand term of (\ref{eq:firstbd}) is also bounded by $c_{\mathrm{up}} c_{\mathrm{rs}} I_{vu}$ (see also the proof of Theorem B.1. in \citet{di_over_2023},) and we get
    \[\left| \frac{\partial h_v^{1, \alpha}}{\partial h_u^{0, \beta}} \right| 
    \leq  c_{\mathrm{up}} \left( c_{\mathrm{rs}} I + c_{\mathrm{mp}} \left( c_F k_h + k_F \right) A \right)_{vu}.\]
    \\
    \\
    \textbf{Case: arbitrary $\ell$.}
    
    The gradient of the hidden vector at the layer of arbitrary $\ell$ is written as
    \begin{align*}
    \frac{\partial h_v^{\ell+1, \alpha}}{\partial h_u^{0, \beta}} = \sum_{p=1}^d \frac{\partial \text{up}^{\ell+1, \alpha}}{\partial x_p} \left( \sum_{r=1}^d \frac{\partial \text{rs}^{\ell+1, p}}{\partial x_r} \frac{\partial h_v^{\ell, r}}{\partial h_u^{0, \beta}} 
    + \sum_{q=1}^d \frac{\partial \text{mp}^{\ell+1, p}}{\partial x_q} \sum_{z} A_{vz} \frac{\partial \left( F(h_z^{\ell}) \odot h_z^{\ell} \right)^q}{\partial h_u^{0, \beta}} \right).
    \end{align*}
    Define and expand the right-hand term of the above equation as follows:
    \begin{align*}
    (\#) &:= \sum_{p}\frac{\partial \text{up}^{\ell+1, \alpha}}{\partial x_p} \sum_{q}\frac{\partial \text{mp}^{\ell+1, p}}{\partial x_q} \sum_{z} A_{vz} \frac{\partial \left( F(h_z^{\ell}) \odot h_z^{\ell} \right)^q}{\partial h_u^{0, \beta}} \\
    &= \sum_{p}\frac{\partial \text{up}^{\ell+1, \alpha}}{\partial x_p}  \sum_{q}\frac{\partial \text{mp}^{\ell+1, p}}{\partial x_q} \sum_{z} A_{vz} \left( \sum_{r=1}^d \frac{\partial F^q}{\partial x_r} \frac{\partial h_z^{\ell, r}}{\partial h_u^{0, \beta}} h_z^{\ell, q} + F^q(h_z^{\ell}) \frac{\partial h_z^{\ell,q}}{\partial h_u^{0, \beta}} \right).
    \end{align*}
    Then, its norm is 
    \begin{align}
    \left| (\#) \right| 
    &\leq \sum_{p} \left| \frac{\partial \text{up}^{\ell+1, \alpha}}{\partial x_p} \right| \sum_{q} \left| \frac{\partial \text{mp}^{\ell+1, p}}{\partial x_q} \right| \sum_{z} A_{vz} \sum_{r=1}^d \left| \frac{\partial F^q}{\partial x_r}\right| \left| \frac{\partial h_z^{\ell, r}}{\partial h_u^{0, \beta}}\right| \left| h_z^{\ell, q}\right| \nonumber \\
    &\ \ \ \  + \sum_{p} \left| \frac{\partial \text{up}^{\ell+1, \alpha}}{\partial x_p} \right| \sum_{q} \left| \frac{\partial \text{mp}^{\ell+1, p}}{\partial x_q} \right| \sum_{z} A_{vz}  \left| F^q(h_z^{\ell})  \right| \left|\frac{\partial h_z^{\ell,q}}{\partial h_u^{0, \beta}}  \right| \nonumber  \\
    & \leq \Big( \sum_{p} \left| \frac{\partial \text{up}^{\ell+1, \alpha}}{\partial x_p} \right| \sum_{q} \left| \frac{\partial \text{mp}^{\ell+1, p}}{\partial x_q} \right| \sum_{z} A_{vz} \sum_{r=1}^d \left| \frac{\partial F^q}{\partial x_r}\right| \left| h_z^{\ell, q}\right| \nonumber \\
    &\quad \quad \quad \quad  + \sum_{p} \left| \frac{\partial \text{up}^{\ell+1, \alpha}}{\partial x_p} \right| \sum_{q} \left| \frac{\partial \text{mp}^{\ell+1, p}}{\partial x_q} \right| \sum_{z} A_{vz}  \left| F^q(h_z^{\ell})  \right| \Big) \nonumber \\
    &\quad \quad \quad \quad \quad \quad \quad \quad \times c_{\mathrm{up}}^{\ell+1} (\left( c_{\mathrm{rs}} I + c_{\mathrm{mp}} \left( c_F k_h + k_F \right) A \right)^{\ell} )_{vu}. \label{eq:generalbd}
    \end{align}
    The first term of (\ref{eq:generalbd}), except the constant term, is bounded from above:
    \begin{align}
    &\sum_{p=1}^d \left| \frac{\partial \text{up}^{\ell+1, \alpha}}{\partial x_p} \right|
    \sum_z A_{vz}
    \left( \sum_{q=1}^d \left| \frac{\partial \text{mp}^{\ell+1, p}}{\partial x_q} \right|
    \left| h_z^{\ell, q} \right| 
    \sum_{r} \left| \frac{\partial F^q}{\partial x_r} \right| \right) \nonumber \\ 
    &\leq \sum_{p=1}^d \left| \frac{\partial \text{up}^{\ell+1, \alpha}}{\partial x_p} \right|
    \sum_z A_{vz}
    \left( \sum_{q=1}^d \left| \frac{\partial \text{mp}^{\ell+1, p}}{\partial x_q} \right| \right)
    k_h
    \left( \sum_{r, q} \left| \frac{\partial F^q}{\partial x_r} \right| \right) \nonumber \\
    &\leq c_{\text{up}} \cdot c_{\text{mp}} \cdot k_h \cdot c_F \cdot \sum_z A_{vz}. \label{eq:leftgeneralbd}
    \end{align}
    Note that the first inequality is derived by using H\"{o}lder inequality again.
    Similarly, the second term of (\ref{eq:generalbd}) is 
    \begin{align}
    \sum_{p=1}^d \left| \frac{\partial \text{up}^{\ell+1, \alpha}}{\partial x_p} \right|
    \sum_z A_{vz} \sum_{q=1}^d \left| \frac{\partial \text{mp}^{\ell+1, p}}{\partial x_q} \right|
    \left| F(h_z^{\ell, q}) \right| \nonumber
    &\leq \sum_{p=1}^d \left| \frac{\partial \text{up}^{\ell+1, \alpha}}{\partial x_p} \right|
    \sum_z A_{vz}
    \left( \sum_{q=1}^d \left| \frac{\partial \text{mp}^{\ell+1, p}}{\partial x_q} \right| k_F \right) \nonumber \\
    &\leq c_{\text{up}} \cdot c_{\text{mp}} \cdot k_F \cdot \sum_z A_{vz}. \label{eq:rightgeneralbd}
    \end{align}
    
    Finally, define
    \[
    \text{UB}(\ell) := c_{\text{up}}^{\ell} (\widetilde{UB})^{\ell} := c_{\text{up}}^{\ell} \left( c_{\text{rs}} I + c_{\text{mp}} \left( c_F k_h + k_F \right) A \right)^{\ell},
    \]
    and then the norm of $\frac{\partial h_v^{\ell+1, \alpha}}{\partial h_u^{0, \beta}} $ is 
    
    \begin{align*}
    \left|\frac{\partial h_v^{\ell+1, \alpha}}{\partial h_u^{0, \beta}} \right|
    &\leq \sum_{p=1}^d \left| \frac{\partial \text{up}^{\ell+1, \alpha}}{\partial x_p} \right|
    \left( \sum_{r=1}^d \left| \frac{\partial \text{rs}^{\ell+1, p}}{\partial x_r} \right| \left| \frac{\partial h_v^{\ell, r}}{\partial h_u^{0, \beta}} \right| \right) + ((\ref{eq:leftgeneralbd}) + (\ref{eq:rightgeneralbd})) \cdot \text{UB}(\ell)_{zu} \\
    &\leq c_{\text{up}} c_{\text{rs}} \cdot \text{UB}(\ell)_{vu} 
    + c_{\text{up}} c_{\text{mp}} k_h c_F \sum_z A_{vz} \text{UB}(\ell)_{zu}
    + c_{\text{up}} c_{\text{mp}} k_F \sum_z A_{vz} \text{UB}(\ell)_{zu} \\
    &= c_{\text{up}}^{\ell+1} \left( c_{\text{rs}} \cdot \widetilde{UB}^{\ell}_{vu} 
    + \sum_z c_{\text{mp}} (k_h c_F + k_F) A_{vz} \widetilde{UB}^{\ell}_{zu} \right) \\
    &= c_{\text{up}}^{\ell+1} \left(\left( c_{\text{rs}} I + c_{\text{mp}} (k_h c_F + k_F) A \right) \widetilde{UB}^{\ell} \right)_{vu}\\
    &= \text{UB}(\ell+1)_{vu},
    \end{align*}
    which completes the proof.
    \end{proof}

\paragraph{Underreaching.} Finally, underreaching is defined as the inability of standard message passing with $K$ layers to capture interactions of range greater than $K$. \citet{alon_on_2021} addresses this problem by adding a message passing layer on a fully connected graph at the last layer of the architecture, which empirically improves the performances but does not fundamentally solve the problem. A solution to this problem is letting the model decide the right depth of the architecture for the task, which is exactly what \method{} does. 

Below, we provide the extended formulation and proof of our theorem about the propagation of a message unchanged through the graph.

\paragraph{Theorem \ref{th:reachability}}
\textit{Let us consider a graph $g=(\mathcal{V}, \mathcal{E}, \mathcal{X})$, where $\mathcal{V}=\{1,\dots,n_g\}$ is the set of nodes connected via oriented edges $\mathcal{E}=\{(u,v) | u,v \in \mathcal{V}\}$, and $\mathcal{X}=\{\mathbf{x}_v \in \mathbb{R}^d | v \in \mathcal{V}\}$ is the set of $d$ node attributes. The neighborhood of a node $v$ is defined as $\mathcal{N}_v = \{ u | (u,v) \in \mathcal{E}\}$. Further, let us define the following message aggregation scheme, which produces vectors $\bm{h}^\ell_v \in \mathbb{R}^d$ for node $v$ at iteration $\ell$, up to a maximum iteration $K$:
}    \begin{align}
        & \bm{h}_{v}^{0} = \bm{x}_v \\ \nonumber
        & \bm{h}_{v}^{\ell} = \sum_{u \in \mathcal{N}_{v}}\bm{F}(u,\ell) \odot \bm{h}_{u}^{\ell-1} \ \ \ (\ell \geq 1), \nonumber
    \end{align}
\textit{where $\bm{F}(u,\ell) \in (0,1)^d, \forall u \in \mathcal{V},\ell \in[1,K]$ and $\odot$ is the element-wise product. Let us also assume that $g$
contains two (not necessarily distinct) nodes $v$ and $u$, and a walk $\left((v,v_2),\dots,(v_{K},u)\right)$ of length $K\geq1$ exists between them. 
}
\textit{Then, $\forall \epsilon > 0$ there exists a parametrization of $\bm{F}$ such that $\bm{h}^K_{u}$ belongs to the closed ball $\mathcal{B}(\bm{x}_{v}, d\epsilon, ||\cdot||_1)$ of radius $d\epsilon$ centered at $\bm{x}_{v}$ under norm $||\cdot||_1$.}    
\begin{proof}
    We prove the statement by induction over $K$.
    \paragraph{Base case ($K=1$):} In this case, the source node $v$ is a neighbor of the destination node $u$. It holds that 
    \begin{align}
        \bm{h}_{u}^{1} & = \sum_{u' \in \mathcal{N}_{u}}\bm{F}(u',1) \odot \bm{h}_{u'}^{0} \nonumber \\ 
        & = \bm{F}(v,1) \odot \bm{x}_{v} + \sum_{u' \in \mathcal{N}_{u}\setminus \{v\} }\bm{F}(u',1) \odot \bm{x}_{u'}.  \nonumber
    \end{align}
    We choose $\bm{F}(v,1) = \bm{1} - \bm{\varepsilon}_v$ and $\bm{F}(u',1) = \bm{\varepsilon}_{u'}$, for some $\bm{\varepsilon}_v, \bm{\varepsilon}_{u'}$ with all components $> 0$, therefore
    \begin{align}
        \bm{h}_{u}^{1} & = (\bm{1} - \bm{\varepsilon}_{v}) \odot \bm{x}_{v} + \sum_{u' \in \mathcal{N}_{u}\setminus \{v\} } \bm{\varepsilon}_{u'} \odot \bm{x}_{u'}  \nonumber \\
        & = \bm{x}_{v} + \sum_{u' \in \mathcal{N}_{u}} (-1)^{\delta(u',v)}\bm{\varepsilon}_{u'} \odot \bm{x}_{u'}. \nonumber
    \end{align}
    At this point, we can always choose $\bm{\varepsilon}_{u'}, \forall u' \in \mathcal{N}_{u}$ such that, $\forall i \in [1,\dots,d]$ it holds 
    \begin{align}
        - \epsilon \leq \sum_{u' \in \mathcal{N}_{u}} (-1)^{\delta(u',v)}\varepsilon_{u'}[i] x_{u'}[i] \leq \epsilon \nonumber
    \end{align}
    noting that the inequality is due to the fact that values of $\bm{x}_{u'}$ can be negative. Therefore, for every $i$ we have that 
    \begin{align}
        x_{v}[i] - \epsilon \leq h_{u}^{1}[i] \leq x_{v}[i] + \epsilon \equiv |h_{u}^{1}[i] - x_{v}[i]| \leq \epsilon
    \end{align}
    which implies $\bm{h}^1_{u} \in \mathcal{B}(\bm{x}_{v}, d\epsilon, ||\cdot||_1)$. 

    \paragraph{Inductive case ($K>1$):} Let us consider a random walk of length $K$, and assume our proposition holds for values up to $K-1$. With the same arguments as before, we can write
        \begin{align}
        \bm{h}_{u}^{K} & = (\bm{1} - \bm{\varepsilon}_{v_K}) \odot \bm{h}^{K-1}_{v_K} + \sum_{u' \in \mathcal{N}_{u}\setminus \{v_K\}} \bm{\varepsilon}_{u'} \odot \bm{h}^{K-1}_{u'}  \nonumber \\
        & = \bm{h}^{K-1}_{v_K} + \sum_{u' \in \mathcal{N}_{u}} (-1)^{\delta(u',v_K)}\bm{\varepsilon}_{u'} \odot \bm{h}^{K-1}_{u'}. \nonumber
    \end{align}
    By the inductive hypothesis we know that there exists a parametrization of $\bm{F}$ (up to iteration $K-1$) such that $\bm{h}^{K-1}_{v_K} \in \mathcal{B}(\bm{x}_{v}, d\frac{\epsilon}{2}, ||\cdot||_1)$. As before, we pick $\bm{\varepsilon}_{u'}, \forall u' \in \mathcal{N}_{u}$ such that, $\forall i \in [1,\dots,d]$ it holds  
        \begin{align}
        - \frac{\epsilon}{2} \leq \sum_{u' \in \mathcal{N}_{u}} (-1)^{\delta(u',v_K)}\varepsilon_{u'}[i] h^{K-1}_{u'}[i] \leq \frac{\epsilon}{2} \nonumber
    \end{align}
    and therefore, noting that  $x_{v}[i] - \frac{\epsilon}{2} \leq h_{u}^{K-1}[i] \leq x_{v}[i] + \frac{\epsilon}{2}$ by the inductive hypothesis,
        \begin{align}
        x_{v}[i] - 2*\frac{\epsilon}{2} \leq h_{u}^{K}[i] \leq x_{v}[i] +2*\frac{\epsilon}{2} \equiv |h_{u}^{K}[i] - x_{v}[i]| \leq \epsilon
    \end{align}
    meaning $\bm{h}^K_{u} \in \mathcal{B}(\bm{x}_{v}, d\epsilon, ||\cdot||_1)$.
\end{proof}

It is worth noting that the use of a differentiable sigmoidal activation makes it impossible to propagate the exact same value, but this holds in the limit of $\epsilon\rightarrow 0$. In light of the above discussion, this theorem hints at \method{} indeed mitigating oversmoothing, oversquashing, and underreaching by being able to propagate a single message unchanged, which is reminiscent of asynchronous message passing \citep{faber_asynchronous_2023}.

\begin{figure}[ht]
    \centering
    \resizebox{0.6\columnwidth}{!}{\input{images/theorem}}
    \caption{We sketch the idea of the proof of Theorem \ref{th:reachability} for a graph of 4 nodes. When messages are filtered appropriately, a message can flow between node 1 and node 4 almost unchanged.}
    \label{fig:theorem-sketch}
\end{figure}

\section{Hyper-parameters Details}
\label{sec:hyper-params}

The set of hyper-parameters tried for the baselines and for \method{} (except for the number of layers) is the same of \citet{gravina_antisymmetric_2023} and \citet{tonshoff_did_2023}. 

In particular, for the baselines and AMP on the synthetic tasks, we used an Adam optimizer with learning rate $0.003$, weight decay $1e-6$, embedding dimension in $[10,20,30]$, tanh activation function and a number of layers (except for \method{}) in $[1,5,10,20]$.

Instead, results on the peptides-func dataset rely on the following hyper-parameters: learning rate $0.001$, dropout $0.1$, $6,8,10$ layers for GCN, GINE and GatedGCN respectively, and similarly embedding dimensions $235,160$ and $95$. We use a readout with a depth of $3$ layers, and RWSE positional encodings, a batch size of 200 and 250 maximum training epochs.

Finally, results on the peptides-struct rely on a similar set of hyper-parameters with the following exceptions: number of layers $6,10,8$ for GCN, GINE, and GatedGCN respectively, Laplacian positional encodings, and embedding dimensions $235,145,100$.

Below, we report the additional hyper-parameters values that we introduced when evaluating \method{}.

\paragraph{Synthetic Datasets} To perform the grid search on \method{}, in addition to the hyper-parameters range used for the base methods (with the exception of the depth), we tested four different distributions $q(\ell; \bm{\lambda})$: a Poisson with initial rate $\lambda=10$, an FN with initial parameters $\mu=10$ and $\sigma \in \{5, 10\}$, and a mixture of two folded normal distributions with initial parameters $\mu_1=5, \sigma_1=3, \mu_2=15, \sigma_2=3$. We fix the prior $p(\bm{\theta^\ell})=\mathcal{N}(\bm{\theta^\ell}; \bm{0}, 10*\textbf{I})$, and we choose between three priors $p(L)$: an uninformative prior, a Poisson with rate $5$, and a folded normal with parameters $\mu=5$ and $\sigma=10$. Finally, the message filtering function was chosen between one that does not filter at all, a function $f(\bm{x})$ acting on node features, and a function $f(\bm{h^{\ell}})$ acting on node embeddings.

\paragraph{Chemical Datasets} We tested three different distributions $q(\ell; \bm{\lambda})$: a Poisson with the initial rate $\lambda=5$, a folded normal with initial parameters $\mu=5$ and $\sigma =1$, and a mixture of two folded normal distributions with initial parameters $\mu_1=1, \sigma_1=1, \mu_2=5, \sigma_2=1$. The tested message filtering functions and the number of layers are the same as the synthetic tasks, whereas $p(\bm{\theta^\ell})=\mathcal{N}(\bm{\theta^\ell}; \bm{0}, 10*\textbf{I})$ for peptides-func and $p(\bm{\theta^\ell})=\mathcal{N}(\bm{\theta^\ell}; \bm{0}, 1*\textbf{I})$ for peptides-struct. Also, following \citet{tonshoff_did_2023}, we use dropout in the output layers, after each activation function, and set its value to $0.1$.

\section{Node Classification Results}
\label{sec:node-classification}

To demonstrate \method{}'s applicability on a broader range of tasks, we report node classification performances on different datasets with varying degrees of homophily, following the data split strategy of \citet{zhu_beyond_2020}. We train a $GCN$ by testing the following hyper-parameters: embedding size in $[8,32,64]$ and number of layers in $[2,4,6]$, Adam optimizer with learning rate $0.01$, a maximum of 2000 training epochs and early stopping with patience $250$ on the validation performance. The hyper-parameters were the same for $\method{}_{GCN}$ with the exception for the number of layers, which is replaced by the distributions tried for the chemical datasets.

\setlength{\tabcolsep}{3pt}
\begin{table}[ht]
    \centering
    \small
    \begin{tabular}{lccccc  ccc} %
    \toprule
       &  \texttt{\bf Texas}           &   \texttt{\bf Wisconsin}           &   \texttt{\bf Actor}            &   \texttt{\bf Squirrel}   &   \texttt{\bf Chameleon}  &   \texttt{\bf Citeseer}           &   \texttt{\bf Pubmed}            &   \texttt{\bf Cora} \\
          \textbf{Hom.\ ratio} $h$ & \textbf{0.11} & \textbf{0.21} & \textbf{0.22} & \textbf{0.22} & \textbf{0.23}  & \textbf{0.74} & \textbf{0.8} & \textbf{0.81} \\
		\textbf{\#Nodes} & 183 & 251 & 7,600 & 5,201 & 2,277 & 3,327 & 19,717 & 2,708 \\
		\textbf{\#Edges} & 295 & 466 & 26,752 & 198,493 & 31,421 &  4,676 & 44,327 & 5,278 \\
		\textbf{\#Classes} & 5 & 5 & 5 & 5 & 5 & 7 & 3 & 6 \\ 
    \midrule
    \midrule
       {H$_2$GCN-1} & $84.86{\scriptstyle\pm6.77}$ & $86.67{\scriptstyle\pm4.69}$ & $35.86{\scriptstyle\pm1.03}$ & $36.42{\scriptstyle\pm1.89}$ & $57.11{\scriptstyle\pm1.58}$ & $77.07{\scriptstyle\pm1.64}$ & $89.40{\scriptstyle\pm0.34}$ & $86.92{\scriptstyle\pm1.37}$ \\
       {H$_2$GCN-2} & $82.16{\scriptstyle\pm5.28}$ & $85.88{\scriptstyle\pm4.22}$ & $35.62{\scriptstyle\pm1.30}$ & $37.90{\scriptstyle\pm2.02}$ & $59.39{\scriptstyle\pm1.98}$ & $76.88{\scriptstyle\pm1.77}$ & $89.59{\scriptstyle\pm0.33}$ & $87.81{\scriptstyle\pm1.35}$ \\
       {GraphSAGE} & $82.43{\scriptstyle\pm6.14}$ & $81.18{\scriptstyle\pm5.56}$ & $34.23{\scriptstyle\pm0.99}$ & $41.61{\scriptstyle\pm0.74}$ & $58.73{\scriptstyle\pm1.68}$ & $76.04{\scriptstyle\pm1.30}$ & $88.45{\scriptstyle\pm0.50}$ & $86.90{\scriptstyle\pm1.04}$ \\
       {GCN-Cheby} & $77.30{\scriptstyle\pm4.07}$ & $79.41{\scriptstyle\pm4.46}$ & $34.11{\scriptstyle\pm1.09}$ & $43.86{\scriptstyle\pm1.64}$ & $55.24{\scriptstyle\pm2.76}$ & $75.82{\scriptstyle\pm1.53}$ & $88.72{\scriptstyle\pm0.55}$ & $86.76{\scriptstyle\pm0.95}$  \\
	   {MixHop} & $77.84{\scriptstyle\pm7.73}$ & $75.88{\scriptstyle\pm4.90}$ & $32.22{\scriptstyle\pm2.34}$ & $43.80{\scriptstyle\pm1.48}$ & $60.50{\scriptstyle\pm2.53}$ & $76.26{\scriptstyle\pm1.33}$ & $85.31{\scriptstyle\pm0.61}$ & $87.61{\scriptstyle\pm0.85}$ \\ 
        \midrule
        {GraphSAGE+JK} & $83.78{\scriptstyle\pm2.21}$ & $81.96{\scriptstyle\pm4.96}$ & $34.28{\scriptstyle\pm1.01}$ & $40.85{\scriptstyle\pm1.29}$ & $58.11{\scriptstyle\pm1.97}$ $76.05{\scriptstyle\pm1.37}$ & $88.34{\scriptstyle\pm0.62}$ & $85.96{\scriptstyle\pm0.83}$ \\
        {Cheby+JK} & $78.38{\scriptstyle\pm6.37}$ & $82.55{\scriptstyle\pm4.57}$ & $35.14{\scriptstyle\pm1.37}$ & $45.03{\scriptstyle\pm1.73}$ & $63.79{\scriptstyle\pm2.27}$ & $74.98{\scriptstyle\pm1.18}$ & $89.07{\scriptstyle\pm0.30}$ & $85.49{\scriptstyle\pm1.27}$ \\
        {GCN+JK} & $66.49{\scriptstyle\pm6.64}$ & $74.31{\scriptstyle\pm6.43}$ & $34.18{\scriptstyle\pm0.85}$ & $40.45{\scriptstyle\pm1.61}$ & $63.42{\scriptstyle\pm2.00}$  $74.51{\scriptstyle\pm1.75}$ & $88.41{\scriptstyle\pm0.45}$ & $85.79{\scriptstyle\pm0.92}$ \\
       \midrule
	   {GAT} & $58.38{\scriptstyle\pm4.45}$ & $55.29{\scriptstyle\pm8.71}$ & $26.28{\scriptstyle\pm1.73}$ & $30.62{\scriptstyle\pm2.11}$ & $54.69{\scriptstyle\pm1.95}$ & $75.46{\scriptstyle\pm1.72}$ & $84.68{\scriptstyle\pm0.44}$ & $82.68{\scriptstyle\pm1.80}$  \\
	   {GEOM-GCN*} & $67.57$ & $64.12$ & $31.63$ & $38.14$ & $60.90$ {$77.99$} &  {$90.05$} & $85.27$  \\
	   \midrule       
	   {MLP} & $81.89{\scriptstyle\pm4.78}$ & $85.29{\scriptstyle\pm3.61}$ & $35.76{\scriptstyle\pm0.98}$ & $29.68{\scriptstyle\pm1.81}$ & $46.36{\scriptstyle\pm2.52}$ & $72.41{\scriptstyle\pm2.18}$ & $86.65{\scriptstyle\pm0.35}$ & $74.75{\scriptstyle\pm2.22}$  \\
       \midrule
       {GCN} & $52.73{\scriptstyle\pm5.98}$ & $44.90{\scriptstyle\pm7.38}$  & $28.02{\scriptstyle\pm0.59}$ & $27.73{\scriptstyle\pm0.91}$ & $41.89{\scriptstyle\pm1.83}$ & $72.71{\scriptstyle\pm1.54}$ & $87.62{\scriptstyle\pm0.59}$ & $84.24{\scriptstyle\pm1.44}$ \\
       {\method{}$_{GCN}$} & $81.46{\scriptstyle\pm3.06}$ & $80.45{\scriptstyle\pm4.47}$ & $34.14{\scriptstyle\pm0.71}$ & $35.28{\scriptstyle\pm0.95}$ & $48.56{\scriptstyle\pm1.70}$ & $75.07{\scriptstyle\pm1.29}$ & $89.67{\scriptstyle\pm1.29}$ & $85.68{\scriptstyle\pm1.52}$ \\
	   \bottomrule
    \end{tabular}
    \caption{Node classification results on heterophilic and homophilic graphs. Results are taken from \citet{zhu_beyond_2020}.}
    \label{tab:node-class}
\end{table}

Table \ref{tab:node-class} clearly shows that wrapping our framework around a $GCN$ always grants a performance improvement. Only for reference, we report other results taken by \cite{zhu_beyond_2020}.  However, please note that there an homophilic/heterophilic graph does not imply that the task is long-range, nor the converse is necessarily true.  

\section{Analysis of \method{}'s Predictions}
\label{sec:results-predictions}

We further delve into the predictions of \method{AMP} on a few representative cases. In Figure~\ref{fig:preds-analysis} (left), we report the mean predictions of the best performing GCN and \method{}$_{\textsc{GCN}}$ runs on the Diameter dataset, where the shaded bands denote the minimal and maximal errors that both models make. Similarly, Figure~\ref{fig:preds-analysis}~(right) shows the same plot but for ADGN and \method{}$_{\textsc{ADGN}}$ on Eccentricity. We can see how \method{} generates an almost ideal average prediction on Diameter and is able to deal with higher eccentricity than the base model (despite an almost identical error being achieved in the latter case). 

\begin{figure*}[ht]
\centering
    \begin{subfigure}
        {\includegraphics{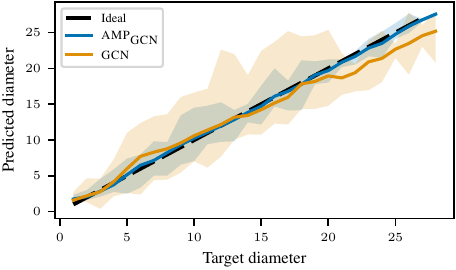}}
    \end{subfigure}
    \begin{subfigure}
        {\includegraphics{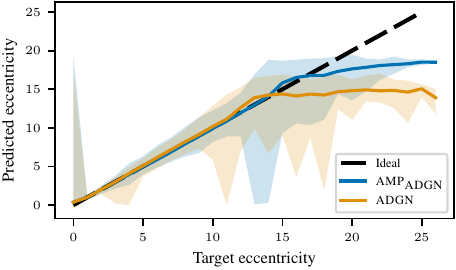}}
    \end{subfigure}
    \caption{We report the average predicted \textit{vs} target graph diameter and node eccentricities for the GCN and ADGN message passing architectures, respectively. Shaded bands denote minimal and maximal errors for each prediction.}
    \label{fig:preds-analysis}
\end{figure*}

\section{Tuned Depth of Base Models}
\label{sec:base-models-depth}

We report, for the base architectures we have tested within \method{}, the number of layers selected by the hyper-parameter search in the original papers. For the synthetic datasets, we obtained this information directly from the authors \citep{gravina_antisymmetric_2023}, whereas for the chemical datasets this information was already available in \citet{tonshoff_did_2023}.

\begin{table}[ht]
\centering
\begin{tabular}{lccccc} \toprule
                                   & Diameter & SSSP & Eccentricity & peptides-func & peptides-struct \\ \midrule
\textsc{GCN}      & 5        & 5    & 10           & 6             & 6               \\
\textsc{GIN}      & 1        & 1    & 1            & -             & -               \\
\textsc{ADGN}     & 10       & 20   & 20           & -             & -               \\
\textsc{GINE}     & -        & -    & -            & 8             & 10              \\
\textsc{GatedGCN} & -        & -    & -            & 10            & 8               \\ \bottomrule
\end{tabular}
\caption{Best number of layers selected during hyper-parameter tuning for the models considered within \method{}. These values are taken from the original papers or provided by their authors.}
\label{tab:best-layers-base-models}
\end{table}

\section{Impact of Positional and Structural Encodings}
\label{appendix:ablation-pe-se}

\begin{table}[ht]
\centering
\scriptsize
\begin{tabular}{lcc}
\toprule
Method     & peptides-func & peptides-struct \\
           & Test AP $\uparrow$ & Test MAE $\downarrow$ \\
\midrule
\multirow{6}*{\rotatebox{90}{\textsc{With PE/SE}}}
\ \ \textsc{GCN}         & $0.6860 \pm 0.0050$ & $0.2460 \pm 0.0007$ \\
\ \ \ \ \ \ \textsc{GINE}        & $0.6621 \pm 0.0067$ & $0.2473 \pm 0.0017$ \\
\ \ \ \ \ \ \textsc{GatedGCN}    & $0.6765 \pm 0.0047$ & $0.2477 \pm 0.0009$ \\
\ \ \ \ \ \ \method{}$_{\textsc{GCN}}$ & $0.7161 \pm 0.0047$ & $0.2446 \pm 0.0026$ \\
\ \ \ \ \ \ \method{}$_{\textsc{GINE}}$ & $0.7065 \pm 0.0105 $ & $0.2468 \pm 0.0026$ \\
\ \ \ \ \ \ \method{}$_{\textsc{GatedGCN}}$ &  $0.6943 \pm 0.0046 $ & $0.2480\pm 0.0012$\\
\midrule
\multirow{3}*{\rotatebox{90}{\textsc{W/O}}}
\ \ \method{}$_{\textsc{GCN}}$ & $0.7102 \pm 0.0074$ & $0.2524 \pm 0.0035$ \\
\ \ \ \ \ \ \method{}$_{\textsc{GINE}}$ & $0.6994 \pm 0.0098 $ & $0.2477 \pm 0.0017$ \\
\ \ \ \ \ \ \method{}$_{\textsc{GatedGCN}}$ &  $0.6725 \pm 0.0113 $ & $0.2466\pm 0.0008$\\

\bottomrule
\end{tabular}
\caption{Mean test scores and standard deviation averaged over 4 final runs on the chemical datasets.}
\label{tab:ablation-pe-se}
\end{table}

For completeness, we investigate the impact of the positional and structural encodings on performances. Our findings show that, on peptides-func, the performance gain compared to AMP is marginal on GCN and GINE, but it becomes substantial on GatedGCN. In all cases, even without the additional encodings, \method{} has better or comparable performance than the base models. We observe a similar trend on peptides-struct, with \method{}$_{\textsc{GCN}}$ suffering the most from the absence of extra information; here, the result is significantly worse than the base model.

\section{Ablation Study on the Depth}
\label{appendix:ablation-depth}
Performing an ablation study about the depth of \method{} implies that we perform model selection across a wide range of (fixed) depths and we learn the importance of each layer. This is an a-posteriori analysis, meaning that we the range of layers to try is suggested by our previous results; the whole point of our contribution is that it may be difficult to find the exact range of message-passing layers that the task needs. The goal of this section is to understand if it is worth fixing the depth after a sensible range of layers has been found by AMP. 

We use the information from Figure \ref{fig:depth-analysis} to set up a reasonable range of fixed number of message passing layers to try, and then we ran the experiments again on the real-world chemical datasets. We also fix the other hyper-parameters to the best configuration found, for each model and dataset, by our model selection procedure.

\begin{table}[ht]
\centering
\scriptsize
\begin{tabular}{lcc}
\toprule
Method     & peptides-func & peptides-struct \\
           & Test AP $\uparrow$ & Test MAE $\downarrow$ \\
\midrule
\ \ \ \ \ \ \ \ \ \method{}$_{\textsc{GCN}}$ & $0.7161 \pm 0.0047$ & $0.2446 \pm 0.0026$ \\
\ \ \ \ \ \ \ \ \ \method{}$_{\textsc{GINE}}$ & $0.7065 \pm 0.0105 $ & $0.2468 \pm 0.0026$ \\
\ \ \ \ \ \ \ \ \ \method{}$_{\textsc{GatedGCN}}$ &  $0.6943 \pm 0.0046 $ & $0.2480\pm 0.0012$\\
\midrule
\multirow{3}*{\rotatebox{90}{\shortstack{\textsc{Fixed} \\ \textsc{Depth}}}}
\ \method{}$_{\textsc{GCN}}$ & $0.7076 \pm 0.0059$ & $0.2497 \pm 0.0009$ \\
\ \ \ \ \ \ \ \ \ \method{}$_{\textsc{GINE}}$ & $0.6999 \pm 0.0041 $ & $0.2481 \pm 0.0014$ \\
\ \ \ \ \ \ \ \ \ \method{}$_{\textsc{GatedGCN}}$ &  $0.6750 \pm 0.0029 $ & $0.2493\pm 0.0013$\\
\bottomrule
\end{tabular}
\caption{Mean test scores and standard deviation averaged over 4 final runs on the chemical datasets.}
\label{tab:ablation-depth}
\end{table}

It appears that fixing the depth does not allow to obtain better performances than the fully adaptive \method{}.


\end{document}